%% file: iclr2027_conference.tex
\documentclass{article} 
\usepackage{iclr2027_conference,times}

\input{math_commands.tex}

\usepackage[
    colorlinks=true,
    citecolor=blue,
    linkcolor=blue,
    urlcolor=black
]{hyperref}
\usepackage{hyperref}
\usepackage{wrapfig}
\usepackage{url}
\usepackage{graphicx} 
\usepackage{cleveref}
\usepackage{amsthm}
\usepackage{algorithm}
\usepackage{listings}
\usepackage{xcolor}
\usepackage{graphicx}
\usepackage{subcaption}
\lstdefinestyle{favalg}{
    language=Python,
    basicstyle=\ttfamily\footnotesize,
    keywordstyle=\color{blue!60!black},
    commentstyle=\color{gray},
    stringstyle=\color{green!40!black},
    columns=fullflexible,
    keepspaces=true,
    showstringspaces=false,
    breaklines=true,
    frame=none,
    aboveskip=4pt,
    belowskip=4pt
}
\usepackage{xcolor}
\definecolor{cPA}{HTML}{2A78D6}    
\definecolor{cAES}{HTML}{8CB9F0}   
\usepackage[table]{xcolor}
\definecolor{festblue}{HTML}{D8E6F7}

\newtheorem{proposition}{Proposition}
\usepackage{booktabs}
\usepackage{caption}
\usepackage{subcaption}
\usepackage[table]{xcolor}
\definecolor{rowgray}{gray}{0.92}
\title{FestDPO: Few-step Generator Alignment \\ with Direct Preference Optimization}
\iclrfinalcopy
\author{%
  \href{mailto:jaewoo@kaist.ac.kr}{Jaewoo Lee}\textsuperscript{*,\,1,\,2}
  \And \href{mailto:kyuil.sim@kaist.ac.kr}{Kyuil Sim}\textsuperscript{*,\,1}
  \And Hyeongyu Kang\textsuperscript{1}
  \AND
  Kanghoon Lee\textsuperscript{1}
  \And Woocheol Shin\textsuperscript{1}
  \And Jinkyoo Park\textsuperscript{\textdagger,\,1,\,3}
  \AND
  \textnormal{%
  \textsuperscript{1}\,KAIST
  \quad
  \textsuperscript{2}\,MongooseAI
  \quad
  \textsuperscript{3}\,Omelet
  }
}

\begin{document}

\maketitle

\begin{abstract}

Few-step generative models can generate high-fidelity samples within a few function evaluations. Despite this efficiency, generated samples may not exhibit desirable properties. When these properties are difficult to encode as an explicit reward function, direct preference optimization (DPO) can align generative models using pairwise preference feedback without training a separate reward model. However, extending DPO to few-step generative models is challenging because few-step generative models are generally implicit, making the likelihood evaluation required by DPO intractable. To address this challenge, we introduce \textbf{Few-step DPO (FestDPO)}, an extension of DPO for few-step generative models that leverages nonparametric likelihood estimation from empirical samples. By exploiting the fast sampling capabilities of few-step generative models, our approach makes sample-based approximation of DPO loss computationally feasible. Furthermore, the sample-based formulation makes FestDPO agnostic to the model family and sampling procedure. Our toy experiment demonstrates that FestDPO matches the reward-tilted target distribution across four few-step generators. For real-world tasks, we evaluate FestDPO in two domains: text-to-image generation and protein backbone generation. In text-to-image generation, FestDPO outperforms preference optimization baselines in both win rates against the base models and human evaluation scores. In protein backbone generation, it achieves a higher $\beta$-sheet fraction and better structural designability than the baselines.

\end{abstract}

\section{Introduction}

Recently emerging few-step generative models \citep{song2023consistency, frans2025one, zhou2025inductive,geng2026mean, deng2026generative} enable fast generation with high-quality samples. However, generated samples may not satisfy user preferences \citep{xu2023imagereward, wu2023human}, including undesirable content \citep{park2024direct, liu2025alignguard} or potentially harmful biological molecules \citep{das2021accelerated}. Encoding these desiderata in an explicit reward function can be difficult, whereas pairwise comparisons provide a way to express preferences for alignment \citep{christiano2017deep,wirth2017survey}. 
Direct Preference Optimization \citep[DPO;][]{rafailov2023direct} provides a solution for aligning generative models with preference pairs, operating without explicit reward models, and avoiding a potentially unstable online reinforcement learning (RL) loop. However, extending DPO to few-step generative models is challenging since the implicit nature of few-step models makes the likelihood evaluation \citep{ai2026joint} required by DPO intractable.

To overcome this limitation, we exploit the inherent sampling efficiency of few-step generative models. We found that the fast sampling speed makes sample-based nonparametric density estimation \citep{silverman2018density} computationally feasible. 
Building on this, we propose \textbf{Few-step DPO (FestDPO)}, a direct preference optimization method tailored to few-step generative models. DPO aims to sample from a reward-tilted unnormalized density, where the reward is derived from the Bradley-Terry preference model \citep{bradley1952rank}. To approximate DPO with few-step generative models, we replace its intractable likelihood terms with sample-based density estimation. The sample-based approach offers a key advantage: FestDPO is agnostic to the sampling procedure and pre-trained model family, allowing it to be applied to a broad range of few-step generative models without model-specific modifications. 
For the domains where distance in the original data space fails to reflect underlying structure, such as semantic similarity in images or rotational invariance in protein structures, we estimate densities in the embedding space of a pretrained encoder.

We evaluate FestDPO in a 1-D toy setting, text-to-image generation \citep{rombach2022high, esser2024scaling}, and protein backbone generation \citep{woo2026riemannian}, demonstrating effective preference optimization across tasks. In the toy setting, we show that FestDPO learns to sample from the unnormalized target density induced by the latent reward in the Bradley-Terry model across diverse few-step generative models.
In text-to-image generation at $512 \times 512$ resolution, FestDPO achieves the highest win rates among preference optimization baselines when fine-tuning both SDXL-Turbo \citep{sauer2024adversarial} and SDXL-DMD2 \citep{yin2024improved}, and receives the highest human ratings. Our ablation studies further show that FestDPO is robust to the choice of feature encoder and to the number of samples used for density estimation, indicating that sample-based estimation with a finite number of samples is effective for preference optimization, even in high-dimensional domains such as images and proteins. 
In protein backbone generation, we demonstrate that FestDPO on a few-step protein backbone generator \citep{woo2026riemannian} outperforms the baselines in structural designability and control over secondary structure composition of the protein. 


\vspace{-15pt}
\begin{figure}
    \centering
    \includegraphics[
        width=\linewidth,
        height=0.75\textheight,
        keepaspectratio
    ]{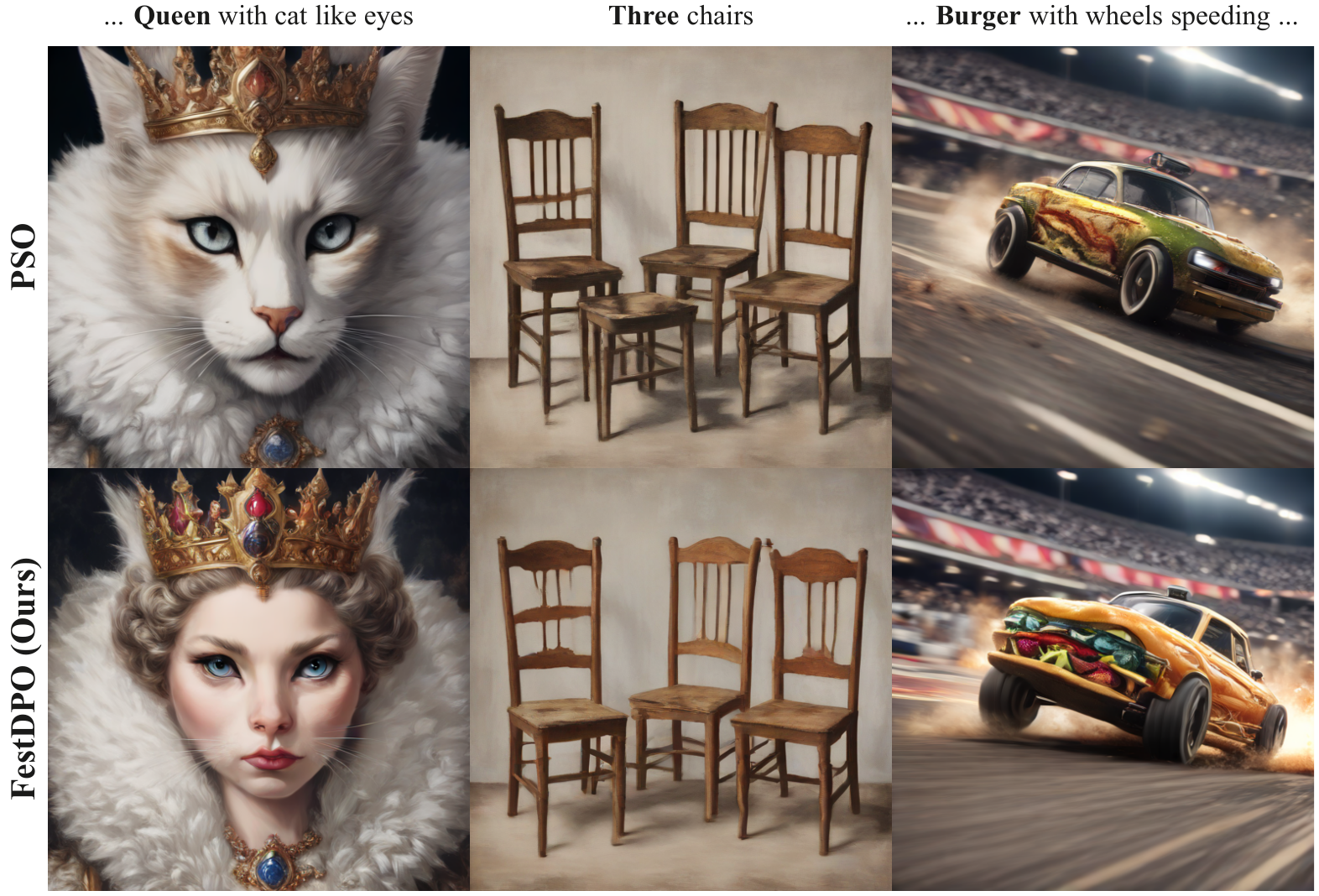}
    \caption{Qualitative comparison of PSO \citep{miao2025tuning} and FestDPO on three text prompts:\\ ``a gorgeous \textbf{queen} with cat-like eyes'' (Left), ``\textbf{three} chairs'' (Middle), and ``3D digital illustration, \textbf{burger with wheels} speeding on the race track, supercharged, detailed, hyperrealistic, 4K'' (Right).}
    \vspace{-15pt}
    \label{fig:main_qualitative}
\end{figure}

\section{Related Work}
\subsection{Few-step generative models}

Few-step generative models enable efficient sampling within a few function evaluations, addressing the expensive computational costs of diffusion \citep{ho2020denoising, song2020score} and flow-based models \citep{liu2022flow, lipman2022flow}. These models broadly fall into two categories. The first comprises direct noise-to-data mappings, such as VAEs \citep{kingma2013auto}, GANs \citep{goodfellow2020generative}, and recently proposed drifting models \citep{deng2026generative}. The second category is flow map models \citep{song2023consistency,kim2024consistency,boffi2024flow, frans2025one,geng2026mean,zhou2025inductive,boffi2026build}, including distilled generators \citep{luo2023diff,sauer2024adversarial,yin2024one}, which learn jump operators between intermediate states between noise and data. Despite advances in sampling speed, many of these models are implicit and do not provide tractable likelihoods \citep{ai2026joint} required by RL and probabilistic inference-based alignment methods \citep{uehara2024understanding, uehara2025inference}. Noise-to-data generators also do not exhibit denoising trajectories, which are required for score matching-based approaches \citep{honavar2025dspo}.

\subsection{Alignment}

Alignment is a core technique for steering generative models towards desirable output, broadly applied for natural language \citep{ziegler2019fine,stiennon2020learning, ouyang2022training}, images \citep{black2024training, fan2023dpok, clark2024directly, domingo2025adjoint, kang2026diffusion, lee2026reward, lee2026pcpo}, robotic control \citep{ren2025diffusion}, and scientific design \citep{gu2024aligning, lee2026diffusion, su2026iterative}. Many alignment methods rely on an explicit reward function, either predefined or learned from a labeled dataset. DPO \citep{rafailov2023direct} instead optimizes generators directly from offline preference pairs under a Bradley-Terry preference model \citep{bradley1952rank}, avoiding both an explicit reward model and an online RL loop. However, extending DPO-based alignment methods to emerging few-step generators is not straightforward: prior works often rely on diffusion-specific formulations \citep{wallace2024diffusion,yang2024using,croitoru2025curriculum,liu2026improving} or require likelihood evaluation on a denoising trajectory \citep{yang2024dense, liang2025aesthetic, honavar2025dspo, miao2025tuning}. Our concurrent work, DrPO \citep{jiang2026drifting}, uses preference pairs to construct drift fields \citep{deng2026generative} for aligning few-step generators, but its connection to the DPO objective remains unestablished. We instead develop a sample-based approximation to DPO and establish its asymptotic consistency, clarifying the corresponding reward-tilted target distribution.

\subsection{Sampling from unnormalized density}
Sampling from unnormalized density has been a fundamental problem of machine learning \citep{hinton2002training, lecun2006tutorial}. Given access to reward or energy evaluations, sampling methods can approximate the target distribution through Monte Carlo estimation \citep{hastings1970monte, del2006sequential}, or by learning amortized samplers, including GFlowNets \citep{bengio2021flow, bengio2023gflownet, choi2025reinforced} and diffusion samplers \citep{zhang2021path, akhound2024iterated, havens2025adjoint, havens2026flow}. The sampling perspective also applies to alignment: the solution of KL-regularized reward maximization is equivalent to sampling from a reward-tilted unnormalized density \citep{korbak2022rl}. Prior methods adopt a sampling perspective to align generative models \citep{venkatraman2024amortizing, domingo2025adjoint, lee2026aligning} using explicit reward evaluations. FestDPO instead learns a few-step implicit sampler that approximates the reward-tilted distribution from a fixed preference dataset, without evaluating a reward function during training.

\section{Background}
\subsection{Preference Optimization}
Given an input $x$, let $y_w \succ y_l\mid x$ denote a preference for $y_w$ over $y_l$. Learning from preferences \citep{christiano2017deep,ziegler2019fine} aims to align model behavior with the latent reward $r^\star$ that governs the preference outcomes. The Bradley–Terry (BT) model \citep{bradley1952rank} formulates how the latent reward $r^\star$ determines preference distribution $p^\star$:
\begin{align} \label{eq: BT likelihood}
p^{\star}\left(y_w \succ y_l \mid x\right)&=\sigma\left(r^{\star}\left(x, y_w\right)-r^{\star}\left(x, y_l\right)\right),
\end{align}
where $\sigma$ is the logistic function. Given a pairwise preference dataset $\mathcal D$, the standard preference optimization objective is the expected reward with a reverse KL divergence penalty to a reference generative model to mitigate reward over-optimization \citep{gao2023scaling}:
\begin{align} \label{eq: rlhf objective}
\mathcal J(\pi)= \mathbb{E}_{x \sim \mathcal{D}, y \sim \pi(\cdot \mid x)}\left[r^\star(x, y)\right]-\beta \mathbb{D}_{\mathrm{KL}}\left[\pi(y \mid x) \| \pi_{\mathrm{ref}}(y \mid x)\right] .
\end{align}
where $\beta>0$ controls the regularization strength and $\pi$, $\pi_{\text{ref}}$ denotes a policy and fixed reference policy respectively. Early approaches \citep{stiennon2020learning,ouyang2022training} train a reward model $\hat r$ to approximate the latent reward $r^\star$ by maximizing the likelihood in \Cref{eq: BT likelihood}. The policy is then optimized using proximal policy optimization (PPO) \citep{schulman2017proximal} with the learned reward model $\hat r$ instead of the latent reward $r^\star$. While effective at aligning models with human preferences, online RL with $\hat r$ requires a separate stage to train the reward model and suffers from the instability and hyperparameter sensitivity of online RL \citep{rafailov2023direct}.

\subsection{Direct Preference Optimization}
DPO \citep{rafailov2023direct} avoids training a separate reward model and an online RL loop by optimizing the generative model directly from preference pairs. DPO objective is derived from the closed-form solution to the KL-regularized objective \citep{peng2019advantage,korbak2022rl}:
\begin{align} \label{eq: optimal distribution}
\pi^\star(y\mid x)
&=\frac1{Z(x)}{\pi_{\mathrm{ref}}(y\mid x)}
\exp\!\left(\frac 1\beta r^\star(x,y)\right),
\end{align}
where $Z(x)=\int \pi_{\mathrm{ref}}(y\mid x)\exp(\frac 1\beta r^\star(x,y))dy$ is the normalizing constant. Then, the latent reward $r^*$ can be expressed as follows:
\begin{align} \label{eq: implicit reward}
r^\star(x, y)=\beta \log \frac{\pi^\star(y \mid x)}{\pi_{\mathrm{ref}}(y \mid x)}+\beta \log Z(x) .
\end{align}
Substituting \Cref{eq: implicit reward} into the \Cref{eq: BT likelihood} cancels the intractable $Z(x)$ and yields the DPO objective for a parameterized policy $\pi_\theta$:
\begin{align}
\label{eq: dpo loss}
\mathcal{L}_\text{DPO}\left(\theta\right)=-\mathbb{E}_{\left(x, y_w, y_l\right) \sim \mathcal{D}}\left[\log \sigma\left(\beta \log \frac{\pi_\theta\left(y_w \mid x\right)}{\pi_{\mathrm{ref}}\left(y_w \mid x\right)}-\beta \log \frac{\pi_\theta\left(y_l \mid x\right)}{\pi_{\mathrm{ref}}\left(y_l \mid x\right)}\right)\right].
\end{align}
Minimizing \Cref{eq: dpo loss} trains $\pi_\theta$ to match the optimal reward-tilted distribution in \Cref{eq: optimal distribution} (see Appendix~\ref{app:optimal distribution} for details).



\section{Extend DPO towards Few-step Generative Models}
While DPO offers a promising way to align generative models with preference pairs, its extension to few-step generators remains underexplored. A main challenge is that many few-step generators are implicit models, for which the likelihood evaluation required by the DPO loss is intractable \citep{ai2026joint}. Existing DPO extensions for diffusion models rely on Evidence Lower Bound (ELBO)-based likelihood surrogates \citep{wallace2024diffusion, yang2024using} or tractable likelihood evaluation of the denoising transition \citep{honavar2025dspo}, which is usually unavailable for few-step implicit generators.

Consequently, extending DPO to few-step generative models requires a new formulation. Given fast sampling and algorithmic diversity, we found that employing sample-based nonparametric estimation to construct a tractable surrogate objective of DPO can lead to the following advantages: (i) making effective use of the computational efficiency of few-step sampling; (ii) being agnostic to the pre-training objective and sampling procedure. These observations lead us to propose \textbf{Few-step DPO (FestDPO)}, a sample-based approximation of DPO for few-step generative models.


\section{Methods}
In this section, we introduce \textbf{Few-step DPO (FestDPO)}, a direct preference optimization method tailored to few-step generative models. \Cref{fig:main} provides an overview of FestDPO: using samples from the trainable generator and the reference generator, we estimate the intractable likelihoods at $y_w$ and $y_l$ through a sample-based approach. Then, we employ estimated likelihoods to approximate DPO objective, leading to the FestDPO objective in \Cref{eq:FestDPO loss}. Theoretically, we establish conditions for the FestDPO loss to converge in probability to the DPO loss, and show that the unique minimizer of the FestDPO loss matches the reward-tilted distribution.



\subsection{FestDPO}
Following \cite{rafailov2023direct}, our goal is to fine-tune a few-step generator to sample from the reward-tilted unnormalized density defined by the latent reward $r^\star$: 
\begin{align} \label{eq : target distribution} 
\pi^\star(y \mid x) \propto \pi_{\text{ref}}(y \mid x) \exp\left(r^\star(x,y)/\beta\right). \end{align} 
To learn a sampler for \Cref{eq : target distribution} without exact likelihood evaluation, we approximate the likelihood terms in the DPO loss \Cref{eq: dpo loss} using sample-based nonparametric density estimation \citep{silverman2018density,liu2022flow}. Thanks to fast sampling from few-step generative models, sample-based estimation becomes practically feasible. For simplicity, we use kernel density estimation (KDE) to approximate the likelihood $\pi_\theta(y\mid x)$ as follows:
\begin{align} 
\label{eq : density approximation}
\hat\pi_{\theta}(y\mid x)=\mathbb E_{y'\sim \pi_\theta}[ k_h\left(y, y'\right)],
\end{align}
where $k_h$ is a Gaussian kernel with bandwidth $h$. We estimate the reference density $\pi_\text{ref}$ analogously. Replacing every likelihood term in \Cref{eq: dpo loss} with \Cref{eq : density approximation} yields a  FestDPO loss: a tractable surrogate of DPO loss for few-step implicit generators:
\begin{align}
\label{eq:FestDPO loss}
\mathcal{L}_{\mathrm{FestDPO}}(\theta)
= -\mathbb{E}_{{(x,y_w,y_l)\sim\mathcal{D} }}
\left[
\log \sigma \left(
\beta \log \frac{\hat{\pi}_{\theta}(y_w \mid x)}
                   {\hat{\pi}_{\mathrm{ref}}(y_w \mid x)}
-
\beta \log \frac{\hat{\pi}_{\theta}(y_l \mid x)}
                   {\hat{\pi}_{\mathrm{ref}}(y_l \mid x)}
\right)
\right].
\end{align}
In practice, we draw $N$ independent noise $\{\epsilon_n\}^N_{n=1}$ and pass them through both the trainable generator and the frozen reference generator simultaneously. We use the resulting samples to construct density estimates and evaluate the FestDPO loss on minibatches of preference pairs. Gradients propagate through $f_\theta$, while $f_{\mathrm{ref}}$ remains fixed. The resulting gradient signal steers the current generator, pulling it toward the preferred samples while pushing it away from the unpreferred samples.

\begin{figure}
    \centering
    \includegraphics[width=\linewidth]{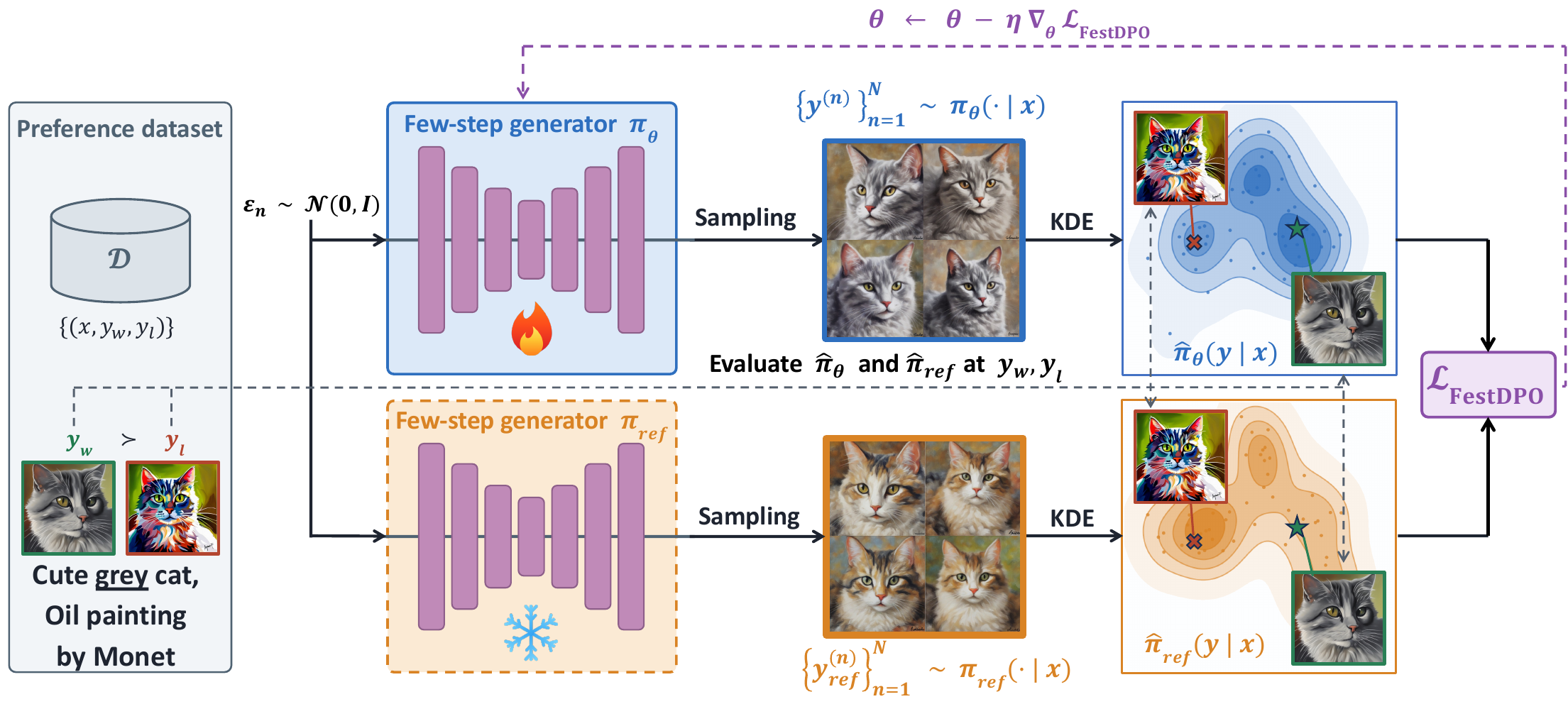}
    \caption{\textbf{Overview of FestDPO.} For a prompt $x$ and a preference pair $(y_w \succ y_l)$, we draw $N$ samples from both the trainable policy $\pi_\theta$ and the frozen reference policy $\pi_{\mathrm{ref}}$. We then construct kernel density estimates $\hat{\pi}_\theta$ and $\hat{\pi}_{\mathrm{ref}}$ and evaluate them at $y_w$ and $y_l$. These estimates yield a tractable surrogate for the DPO loss (\Cref{eq:FestDPO loss}), which we minimize to align $\pi_\theta$ with the preference data.}
    \label{fig:main}
\end{figure}





\subsection{Theoretical Analysis}
\label{sec:theorectical}
In this section, we theoretically analyze the connection between FestDPO and DPO. Under the consistency of kernel density estimation, we show that the FestDPO objective asymptotically recovers the DPO objective. We further establish that minimizers of the FestDPO objective are consistent for the DPO minimizer.

\begin{proposition}[Asymptotic Equivalence of FestDPO and DPO]
\label{eq:proposition1}
Let $\hat{\pi}_{\theta}$ and $\hat{\pi}_{\text{ref}}$ be KDEs constructed from $N$ samples with bandwidth $h$, using a kernel $K$ satisfying the KDE consistency conditions \citep{silverman2018density}. Suppose that $h\to0$ and $Nh^d\to\infty$ as $N\to\infty$, and that, for almost every $(x,y_w,y_l)$ under $\mathcal D$, 
$\pi_\theta(\cdot\mid x)$ and $\pi_{\mathrm{ref}}(\cdot\mid x)$ are continuous and strictly positive at $y_w$ and $y_l$. 
If the per-example FestDPO losses are uniformly integrable, then
\begin{equation}
    \mathcal{L}_{\mathrm{FestDPO}}(\theta)
    \rightarrow
    \mathcal{L}_{\mathrm{DPO}}(\theta).
\end{equation}
\end{proposition}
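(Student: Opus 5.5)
The plan is a pointwise-then-integrate argument. The KDE $\hat\pi_\theta(y\mid x)$ is a random variable depending on the $N$ samples, so the convergence is understood in probability, as announced in the section introduction. First I fix a triple $(x,y_w,y_l)$ outside the $\mathcal D$-null set where the continuity and positivity hypotheses fail. By the classical pointwise consistency of the kernel density estimator \citep{silverman2018density} under $h\to0$ and $Nh^d\to\infty$, the four estimates $\hat\pi_\theta(y_w\mid x)$, $\hat\pi_\theta(y_l\mid x)$, $\hat\pi_{\mathrm{ref}}(y_w\mid x)$ and $\hat\pi_{\mathrm{ref}}(y_l\mid x)$ converge in probability to their targets. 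Concretely, the bias $\mathbb E[k_h(y,Y')]-\pi(y\mid x)$ vanishes by continuity at $y$, and the variance is $O\bigl(1/(Nh^d)\bigr)\to0$, so Chebyshev applies.

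Next, because each target value is strictly positive, the map
\[
(a,b,c,d)\mapsto -\log\sigma\bigl(\beta\log(a/c)-\beta\log(b/d)\bigr)
\]
is continuous at the limit point in $(0,\infty)^4$. The continuous mapping theorem then gives convergence in probability of the per-example FestDPO loss $\ell_N(x,y_w,y_l)$ to the per-example DPO loss $\ell(x,y_w,y_l)$. Note that $\hat\pi$ may equal zero or be tiny with small probability. This is harmless for convergence in probability, since the event $\{\hat\pi<\pi/2\}$ has vanishing probability.

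Finally I pass to the expectation over $\mathcal D$, and, if the statement is read as convergence of expected losses, also over the sampling noise. Joint measurability lets me view $\ell_N$ as a random variable on the product space of the data and the $N$ samples. By the previous step and dominated convergence applied to $\mathbb P(|\ell_N-\ell|>\varepsilon\mid x,y_w,y_l)\le 1$, $\ell_N\to\ell$ in probability on the product space. Uniform integrability of $\{\ell_N\}$, which is assumed in the statement, together with convergence in probability yields $L^1$ convergence by Vitali's theorem. Hence $\mathbb E[\ell_N]\to\mathbb E[\ell]$, i.e.\ $\mathcal L_{\mathrm{FestDPO}}(\theta)\to\mathcal L_{\mathrm{DPO}}(\theta)$. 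If instead the FestDPO loss is viewed as a random quantity conditional on the samples, the same $L^1$ bound $\mathbb E|\mathcal L_{\mathrm{FestDPO}}-\mathcal L_{\mathrm{DPO}}|\le\mathbb E|\ell_N-\ell|\to0$ gives convergence in probability via Markov.

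I expect the main obstacle to be the interplay between the $\log$ and the possibly vanishing KDE values. Since the loss is unbounded as $\hat\pi_\theta(y_w)\to0$ or $\hat\pi_{\mathrm{ref}}\to0$, pointwise convergence alone does not control the expectation. That is exactly why the uniform integrability hypothesis is needed, and the care lies in stating the pointwise step on the full product probability space so that Vitali applies cleanly. The pointwise KDE consistency itself only needs continuity at $y$; if the density is unbounded elsewhere, I would use a bounded, integrable kernel so that $\mathrm{Var}\le \|K\|_\infty\,\mathbb E[k_h]/(Nh^d)$, which still tends to zero.
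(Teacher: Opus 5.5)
Your proposal is correct and follows the same route as the paper's proof: pointwise KDE consistency at $y_w,y_l$, the continuous mapping theorem through the log-density ratios and $-\log\sigma$, and uniform integrability plus convergence in probability giving $L^1$ convergence and hence convergence of the expected losses. Your bias--variance justification of KDE consistency and the dominated-convergence step that lifts per-triple convergence in probability to the joint space of data and samples make explicit two details the paper leaves implicit.
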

\Cref{eq:proposition1} shows that, as the number of samples from the model and reference increases, the FestDPO objective converges to the DPO objective. We further show that FestDPO minimizer converges to the DPO minimizer, which corresponds to the optimal target distribution induced by the KL-regularized reward maximization objective in Appendix~\ref{app:optimal distribution}.

\begin{proposition}[Consistency of the FestDPO Minimizer]
\label{eq:proposition2}
Let $(\Theta,d)$ be a metric space. Suppose that the convergence in \Cref{eq:proposition1} holds uniformly over $\Theta$ and that the population DPO objective admits a unique, well-separated minimizer $\theta^\star$. Then, for any sequence of $o_p(1)$-approximate minimizers $\hat{\theta}^{\star}$ of the FestDPO objective,
\begin{equation}
    d(\hat{\theta}^{\star},\theta^\star)
    \xrightarrow{p}0,
    \qquad N\to\infty.
\end{equation}
\end{proposition}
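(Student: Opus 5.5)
This is the standard argmin-consistency argument for M-estimators, in the style of van der Vaart's Theorem 5.7, applied with $\mathcal L_{\mathrm{FestDPO}}$ as the random criterion and $\mathcal L_{\mathrm{DPO}}$ as the deterministic limit. Write $\hat{\mathcal L}_N(\theta)=\mathcal L_{\mathrm{FestDPO}}(\theta)$; it depends on $N$ through the $N$ generator samples and bandwidth $h=h_N$. Write $\mathcal L(\theta)=\mathcal L_{\mathrm{DPO}}(\theta)$. The hypotheses give two ingredients.

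\begin{itemize}
\item \emph{Uniform convergence.} The convergence of \Cref{eq:proposition1}, made uniform over $\Theta$, reads
\[
\sup_{\theta\in\Theta}\bigl|\hat{\mathcal L}_N(\theta)-\mathcal L(\theta)\bigr|\xrightarrow{p}0 .
\]
Since the KDEs are random, I interpret the mode of convergence as convergence in probability.
\item \emph{Well-separated minimizer.} For every $\varepsilon>0$,
\[
\inf_{\theta:\,d(\theta,\theta^\star)\ge\varepsilon}\mathcal L(\theta)>\mathcal L(\theta^\star).
\]
\end{itemize}
The $o_p(1)$-approximate minimizer condition means $\hat{\mathcal L}_N(\hat\theta^\star)\le \inf_\theta \hat{\mathcal L}_N(\theta)+o_p(1)$.

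\textbf{Step 1: show $\mathcal L(\hat\theta^\star)\to\mathcal L(\theta^\star)$ in probability from above.} Decompose
\[
\mathcal L(\hat\theta^\star)-\mathcal L(\theta^\star)=\bigl[\mathcal L(\hat\theta^\star)-\hat{\mathcal L}_N(\hat\theta^\star)\bigr]+\bigl[\hat{\mathcal L}_N(\hat\theta^\star)-\hat{\mathcal L}_N(\theta^\star)\bigr]+\bigl[\hat{\mathcal L}_N(\theta^\star)-\mathcal L(\theta^\star)\bigr].
\]
\begin{itemize}
\item The first and third brackets are each bounded in absolute value by $\sup_\theta|\hat{\mathcal L}_N-\mathcal L|=o_p(1)$.
\item The middle bracket is at most $o_p(1)$ by approximate minimality, since $\hat{\mathcal L}_N(\hat\theta^\star)\le\hat{\mathcal L}_N(\theta^\star)+o_p(1)$.
\end{itemize}
Hence $\mathcal L(\hat\theta^\star)\le\mathcal L(\theta^\star)+o_p(1)$. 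Uniformity is essential here: $\hat\theta^\star$ is data-dependent, so pointwise convergence at a fixed $\theta$ cannot control the first bracket.

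\textbf{Step 2: convert to parameter consistency.} Fix $\varepsilon>0$ and set
\[
\eta=\inf_{d(\theta,\theta^\star)\ge\varepsilon}\mathcal L(\theta)-\mathcal L(\theta^\star)>0 .
\]
On the event $\{d(\hat\theta^\star,\theta^\star)\ge\varepsilon\}$ we have $\mathcal L(\hat\theta^\star)\ge\mathcal L(\theta^\star)+\eta$. Therefore
\[
\Pr\bigl(d(\hat\theta^\star,\theta^\star)\ge\varepsilon\bigr)\le\Pr\bigl(\mathcal L(\hat\theta^\star)-\mathcal L(\theta^\star)\ge\eta\bigr)\to0
\]
by Step 1. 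Since $\varepsilon$ was arbitrary, $d(\hat\theta^\star,\theta^\star)\xrightarrow{p}0$.

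\textbf{Main obstacle.} The main obstacle is not the argument above but justifying its first ingredient. \Cref{eq:proposition1} gives only pointwise convergence in $\theta$. Uniformity is assumed in the statement, so the proof itself is short.

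If I had to discharge that assumption, I would try the following.
\begin{itemize}
\item Require $\Theta$ to be compact.
\item Prove stochastic equicontinuity of $\theta\mapsto\hat{\mathcal L}_N(\theta)$. One route is Lipschitz continuity of $f_\theta$ in $\theta$, which passes through the Gaussian kernel to $\log\hat\pi_\theta$, provided $\hat\pi_\theta$ is bounded away from zero at the data points.
\item Upgrade pointwise convergence to uniform convergence by a finite-net argument.
\end{itemize}
Step 2 also needs measurability of $\hat\theta^\star$; I would either assume it or phrase the probabilities with outer probability.

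Since DPO's minimizer is defined over distributions, well-separatedness at the parameter level is also a genuine assumption. One way to motivate it is an identifiable parameterization combined with the strict convexity of $-\log\sigma$ along the implicit reward, but I take it as given here.
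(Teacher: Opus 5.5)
Your proposal is correct and is essentially the paper's proof. The paper also combines $o_p(1)$-approximate minimality with the uniform convergence $\sup_{\theta}|\mathcal{L}_{\mathrm{FestDPO}}(\theta)-\mathcal{L}_{\mathrm{DPO}}(\theta)|\xrightarrow{p}0$ to obtain $0\le \mathcal{L}_{\mathrm{DPO}}(\hat\theta^\star)-\mathcal{L}_{\mathrm{DPO}}(\theta^\star)\xrightarrow{p}0$, then concludes via well-separatedness by citing van der Vaart's Theorem~5.7; the only difference is that you spell out that final $\varepsilon$--$\eta$ step (and flag measurability), which the paper leaves to the citation.
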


The detailed derivations for Proposition~\ref{eq:proposition1} and \ref{eq:proposition2} are provided in Appendix~\ref{app:proof of proposition1} and \ref{app:proof of proposition2}, respectively.

\subsection{Kernel Density Estimation in Feature Space}
Kernel similarities in raw space may poorly capture the image semantics \citep{theis2015note} or depend on arbitrary rotations and translations of protein structures \citep{jumper2021highly, satorras2021n}. Motivated by kernel-based distribution
matching in embedding spaces \citep{binkowski2018demystifying,deng2026generative,lee2026aligning}, we compute kernel similarity in the embedding space of a pretrained encoder $\phi$ to better reflect semantic and geometric similarities. Specifically, we replace $\hat{\pi}_\theta(\cdot \mid x)$ in \Cref{eq:FestDPO loss} with
\begin{align}
\hat{\pi}_{\theta,\phi}(y\mid x)=\mathbb E_{y'\sim \pi_\theta(\cdot\mid x)}\left[ k_h\left(\phi(y), \phi\left(y'\right)\right)\right] 
\end{align}
\section{Experiments}
In this section, we evaluate FestDPO in three tasks: a 1D toy setting with diverse few-step generative models, text-to-image generators, and protein backbone generation. These experiments demonstrate that FestDPO is capable of sampling from preference-tilted distributions across tasks, from one-dimensional data to high-resolution images and protein structures on SE(3) manifold. Implementation details and hyperparameters for all experiments are provided in \Cref{app:experiments details}
\subsection{Toy Experiments} 
\label{sec: toy experiments}

\begin{figure}
    \centering
    \includegraphics[width=1\linewidth]{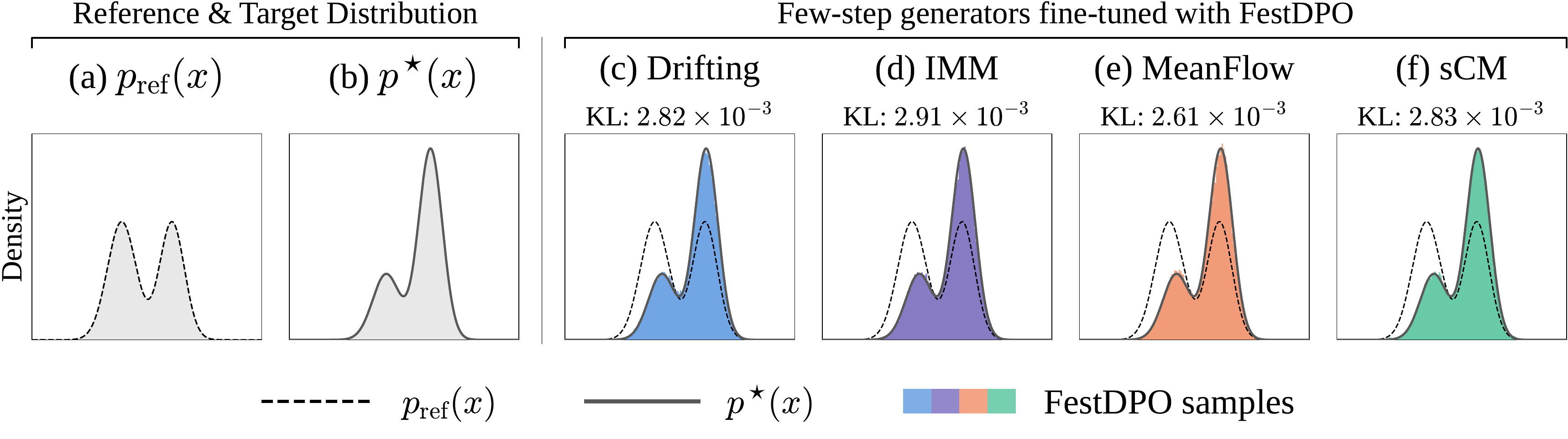}
    \captionsetup{skip=2pt}
    \caption{\textbf{FestDPO aligns diverse few-step generators.} (a) The reference distribution $p_{\text{ref}}(x)$ and (b) the reward-tilted target distribution $p^\star(x) \propto p_\text{ref}(x)\exp(r^{\star}(x))$. (c)--(f) Distributions of Drifting, IMM, MeanFlow, and sCM fine-tuned with FestDPO. Across all four generators, the fine-tuned distributions closely match $p^\star(x)$, with KL divergences to the target of approximately $3 \times 10^{-3}$.}
    \label{fig:toy}
    \vspace{-10pt}
\end{figure}

To evaluate the capability of FestDPO in sampling from a tilted target distribution defined by the latent reward model: $p^{\star}(x)\propto p_\text{ref}(x)\exp(r^{\star}(x))$, we perform a synthetic experiment on a 1D Gaussian Mixture Model. We first pre-train four few-step generators, namely Drifting \citep{deng2026generative}, IMM \citep{zhou2025inductive}, MeanFlow \citep{geng2026mean}, and sCM \citep{lu2025simplifying}. We construct an offline preference dataset by drawing pairs of samples from each pre-trained generator and labeling them with the Bradley-Terry model under $r^\star$, and fine-tune each generator with FestDPO on this dataset. As shown in \Cref{fig:toy}, FestDPO closely matches the target distribution across all generators, achieving a KL divergence of approximately $3 \times 10^{-3}$, while preserving the multi-modal structure of the target. Detailed descriptions are provided in Appendix~\ref{app:toy experiments}.

\subsection{Text-to-image generation} 
We evaluate FestDPO for aligning text-to-image models with user preferences. We first describe the experimental setup in Section~\ref{sec: T2I setup}, then report the main results in Section~\ref{sec: T2I main results}, and finally present ablation studies in Section~\ref{sec: T2I ablation}.

\subsubsection{Experimental setup}
\label{sec: T2I setup}

\textbf{Baselines.} We compare FestDPO with PSO~\citep{miao2025tuning} and DrPO~\citep{jiang2026drifting}, which are the preference optimization methods for fine-tuning a few-step generator from offline preference pairs. We defer a detailed description of the baselines to Appendix~\ref{app:baseline-details}.

\textbf{Base Models \& Preference Dataset.} We fine-tune two few-step text-to-image generators: SDXL-Turbo \citep{sauer2024adversarial} and SDXL-DMD2 \citep{yin2024improved}. All methods are trained on the training split of Pick-a-Pic v2~\citep{kirstain2023pick}, an open dataset of text-to-image prompts and preferences of real users over generated images.

\textbf{Evaluation.} We evaluate the generated images using PickScore~\citep{kirstain2023pick} and Aesthetic Score~\citep{schuhmann2023laion}. We also report pairwise win rates against the corresponding base models. We then evaluate on 500 prompts sampled from each of three sets: the held-out Pick-a-Pic v2 test split (PickV2), Parti-Prompts~\citep[P2;][]{yu2022scaling}, and the HPSv2 benchmark~\citep{wu2023human}. We further conduct a human study on 80 of these prompts, reported in \Cref{sec: T2I human evaluations}.

\textbf{Feature Encoder.} Our default feature extractor is the 340M-parameter Latent-MAE encoder \citep{he2022masked} from Drifting Models~\citep{deng2026generative}. To assess the sensitivity of FestDPO to the choice of feature representation, we further conduct ablations using DINOv2 \citep{oquab2023dinov2} and CLIP \citep{radford2021learning} features in \Cref{sec: T2I ablation}. 

\subsubsection{Main Results}
\label{sec: T2I main results}
We report the win rates of FestDPO, PSO, and DrPO over base models on Pick-a-Pic v2, PartiPrompts, and HPSv2 evaluation datasets in \Cref{fig:T2I win rate}. For each base model, we compute the win rate as the percentage of prompts on which the fine-tuned model scores higher than the base model, separately for PickScore and Aesthetic score. FestDPO achieves the highest win rate on every prompt dataset under PickScore and Aesthetic. 
\Cref{fig:T2I anytime rate} further shows the anytime performance on SDXL-DMD2. FestDPO rapidly improves PickScore within the first 150 steps and remains stable, whereas both baselines show slower and smaller improvements.
We report the absolute scores and win rates on Pick-a-Pic v2, PartiPrompts, and HPSv2 evaluation datasets in Appendix~\ref{app:additional experiments} and provide the qualitative comparison in Appendix~\ref{app: Qualitative_results}.


\begin{figure}
    \centering
    \includegraphics[width=1.0\linewidth]{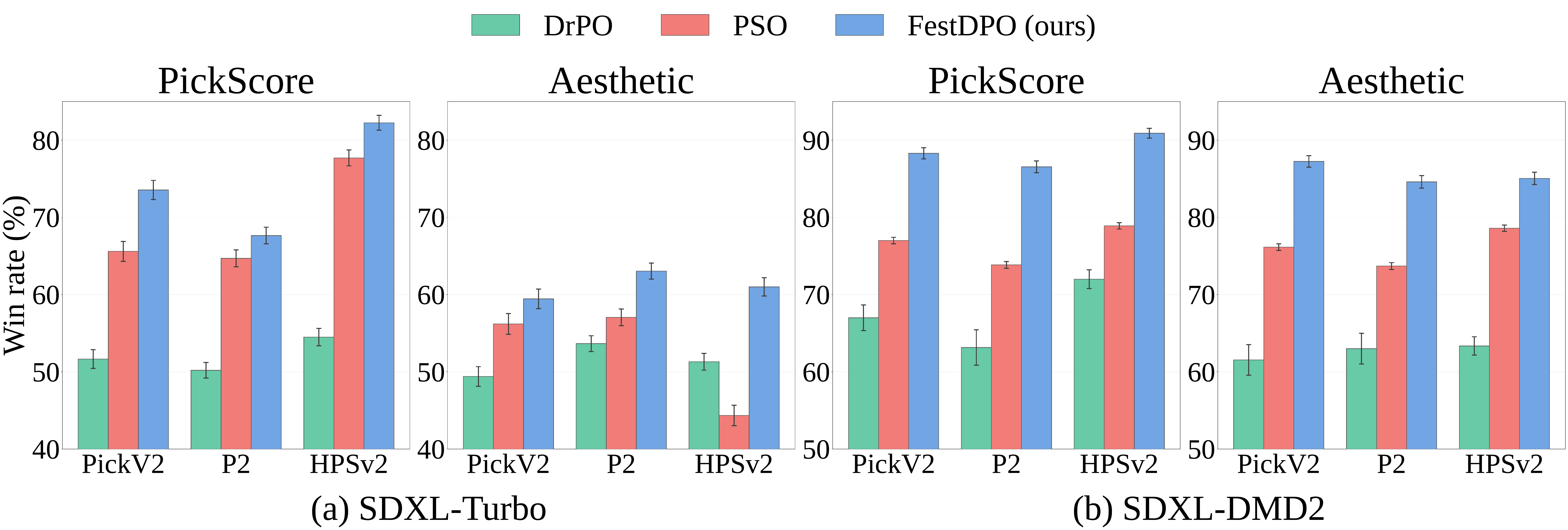}
    \caption{Win rate comparison against (a) SDXL-Turbo and (b) SDXL-DMD2 on Pick-a-Pic v2, Parti-Prompts, and HPSv2 evaluation datasets. Each bar represents the mean win rate (\%) of each method for PickScore and Aesthetic score, and error bars denote the standard deviation.}
    \label{fig:T2I win rate}
    \vspace{-10pt}
\end{figure}

\begin{figure}
    \centering
    \includegraphics[width=0.95\linewidth]{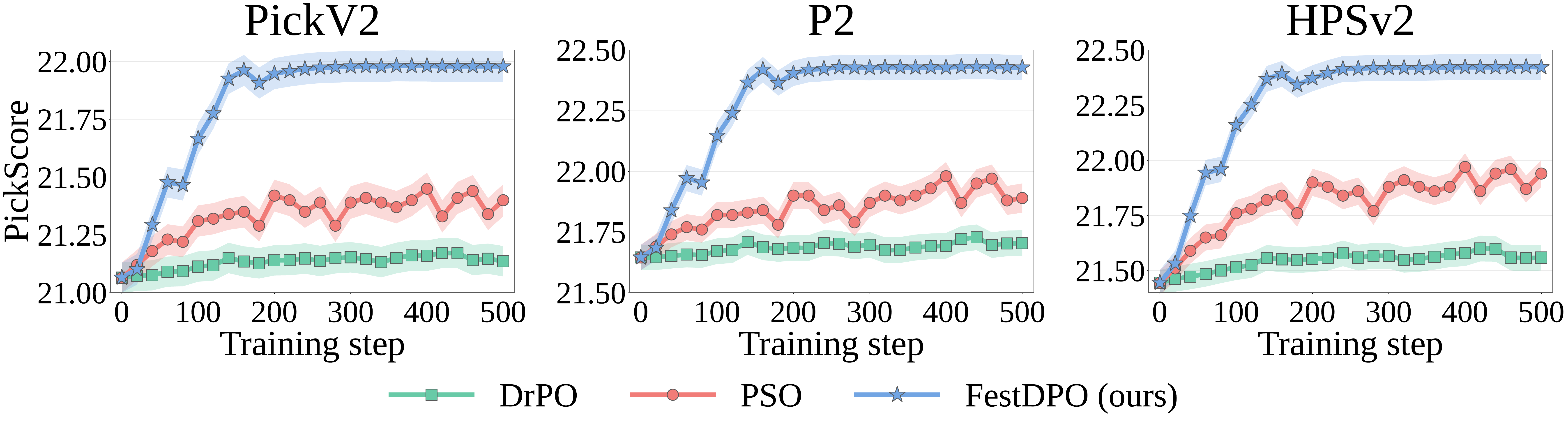}
    \vspace{-5pt}
    \caption{PickScore of DrPO, PSO, and FestDPO during fine-tuning of SDXL-DMD2 on Pick-a-Pic v2, Parti-Prompts, and HPSv2. Lines and shaded regions denote the mean and standard deviation.}
    \label{fig:T2I anytime rate}
    \vspace{-10pt}
\end{figure}

\subsubsection{Ablation Study}
\label{sec: T2I ablation}

In this section, We conduct ablation studies on HPSv2 with SDXL-Turbo. Since FestDPO employs nonparametric density estimation, we first study the effect of the number of samples $N$ used to construct the KDE. As shown in Table~\ref{tab:kde sample}, increasing $N$ from 8 to 24 improves PickScore, while Aesthetic scores remain approximately stable across $N$. These results indicate that FestDPO is robust to the choice of $N$, with even a small number of samples yielding substantial gains over the pretrained model. We further investigate the sensitivity of FestDPO to the feature encoder. \Cref{tab:kde-feature} compares alignment performance using Latent-MAE \citep{deng2026generative}, CLIP \citep{radford2021learning}, and DINOv2 \citep{oquab2023dinov2} as pretrained encoders for density estimation. FestDPO achieves comparable performance across all three encoders, demonstrating that its effectiveness is robust to the choice of feature encoder. Results on Pick-a-Pic v2, Parti-Prompts, and HPSv2 are provided in \Cref{app:additional experiments}.

\begin{table}[t]
\centering
\small
\setlength{\tabcolsep}{5pt}
\begin{minipage}[t]{0.42\linewidth}
\centering
\caption{Sensitivity analysis on $N$.}
\label{tab:kde sample}
\begin{tabular}{lcc}
\toprule
 $N$& PickScore $\uparrow$ & Aesthetic $\uparrow$ \\
\midrule
$N=8$            & 23.06 $\pm$ 1.32 & 6.13 $\pm$ 0.68 \\
$N=12$ (default) & 23.13 $\pm$ 1.35 & 6.11 $\pm$ 0.66 \\
$N=24$           & 23.14 $\pm$ 1.37 & 6.07 $\pm$ 0.63 \\
\bottomrule
\end{tabular}
\end{minipage}
\hfill
\begin{minipage}[t]{0.54\linewidth}
\centering
\caption{Ablation on the feature encoder $\phi$.}
\label{tab:kde-feature}
\begin{tabular}{lcc}
\toprule
Feature Encoder & PickScore $\uparrow$ & Aesthetic $\uparrow$ \\
\midrule
Latent MAE (default) & 23.13 $\pm$ 1.35 & 6.11 $\pm$ 0.66 \\
CLIP                 & 23.17 $\pm$ 1.35 & 6.08 $\pm$ 0.64 \\
DINOv2               & 23.07 $\pm$ 1.34 & 6.08 $\pm$ 0.65 \\
\bottomrule
\end{tabular}
\end{minipage}
\end{table}

\subsubsection{Human Evaluations}
\label{sec: T2I human evaluations}
\begin{wraptable}{r}{0.42\linewidth}
\vspace{-15pt}
\centering
\small
\caption{Human evaluation results (mean rating). Best results are in \textbf{bold}.}
\vspace{-5pt}
\label{tab:human-eval}
\setlength{\tabcolsep}{4pt}
\begin{tabular}{lcc}
\toprule
Method & Align. $\uparrow$ & Aesth. $\uparrow$ \\
\midrule
SDXL-Turbo     & 3.683 & 3.149 \\
PSO            & 3.767 & 3.286 \\
DrPO           & 3.696 & 3.158 \\
FestDPO (ours) & \textbf{4.199} & \textbf{3.860} \\
\bottomrule
\end{tabular}
\vspace{-10pt}
\end{wraptable}
We conduct a human evaluation study to assess the prompt alignment and aesthetic quality of the generated samples. As shown in \Cref{tab:human-eval}, FestDPO receives the highest ratings for both prompt alignment and aesthetics. The results indicate that human annotators consistently preferred images generated by FestDPO over those from the baselines. Details of the human evaluation protocol are provided in Appendix \ref{app:human evaluation}.

\subsection{Protein backbone generation}
We evaluate FestDPO on few-step protein backbone generation \citep{woo2026riemannian} to assess the capability of aligning the few-step generative model on the SE(3) manifold.

\subsubsection{Experiment setup}
\paragraph{Base model and baselines.} 
We fine-tune Riemannian MeanFlow \citep[RMF;][]{woo2026riemannian}, a few-step flow map-based protein backbone generator defined on the $\mathrm{SE}(3)$ manifold. Specifically, we fine-tune the RMF-S variant with single-step generation. We employ DrPO \citep{jiang2026drifting} as a baseline and exclude PSO \citep{miao2025tuning} since extending PSO to flow maps is not trivial.

\paragraph{Evaluation.} 
We optimize two reward functions: (i) self-consistency root mean square distance (scRMSD) for structural designability, and (ii) ratio of $\beta$-sheet secondary structure (SS-match). scRMSD evaluates whether a generated protein backbone is realistic. To compute scRMSD, we inverse-fold the generated structure using ProteinMPNN \citep{dauparas2022robust}, predict the 3D structure of the resulting sequence with ESMFold \citep{lin2023evolutionary}, and calculate the RMSD against the original generated backbone. Motivated by SS rewards in prior works \citep{huguet2024sequence, pmlr-v267-venkatraman25a, su2026iterative}, our second reward encourages the formation of $\beta$-sheet secondary structure within the protein. Secondary structures are commonly grouped into $\alpha$-helices, $\beta$-strands, and coils. We use DSSP \citep{kabsch1983dictionary} and P-SEA \citep{labesse1997p} to assign secondary-structure labels to generated backbones. For each target, we construct a separate offline dataset of 8,000 preference pairs from RMF-S samples, using one-step generation for SS-match and ten-step generation for scRMSD. Within each pair, we label the sample with higher SS-strand reward or lower scRMSD as preferred. Following \citet{woo2026riemannian}, we measure structural similarity between the generated backbones through pairwise TM-score \citep{zhang2004scoring}.

\paragraph{Feature Encoder.} 
We use the pretrained ProteinMPNN encoder \citep{dauparas2022robust} to obtain structural features invariant to global rotations and translations. The encoder remains frozen throughout training and defines the feature space used for kernel density estimation.

\subsubsection{Results}
As illustrated in \Cref{tab:beta-sheet-eval} and \Cref{tab:designability-eval}, FestDPO achieves a higher $\beta$-strand residue fraction and lower scRMSD than both base model and DrPO. FestDPO exhibits better structural diversity than DrPO, although diversity decreases relative to the base model. \Cref{fig:protein} compares backbones generated by the base model and FestDPO after optimization with the SS-match reward, showing increased $\beta$-sheet content in the generated protein structure. 

\begin{table*}[t]
\centering

\begin{minipage}[t]{0.43 \textwidth}
\vspace{0pt}
\centering

\caption{$\beta$-sheet optimization results.}
\label{tab:beta-sheet-eval}
\resizebox{\linewidth}{!}{%
\begin{tabular}{lcc}
\toprule
Method & $\beta$-sheet \% $\uparrow$ & TM-diversity $\downarrow$ \\
\midrule
RMF-S (base)  & 0.153 $\pm$ 0.006
        & \textbf{0.281 $\pm$ 0.006} \\
DrPO    & 0.218 $\pm$ 0.005
        & 0.294 $\pm$ 0.002 \\
FestDPO & \textbf{0.295 $\pm$ 0.014}
        & 0.292 $\pm$ 0.002 \\
\bottomrule
\end{tabular}%
}

\vspace{1.5em}

\caption{scRMSD optimization results.}
\label{tab:designability-eval}
\resizebox{\linewidth}{!}{%
\begin{tabular}{lcc}
\toprule
Method & scRMSD $\downarrow$ & TM-diversity $\downarrow$ \\
\midrule
RMF-S (base) & 9.303 $\pm$ 0.549 & \textbf{0.281 $\pm$ 0.006} \\
DrPO    & 9.010 $\pm$ 0.383  & 0.305 ± 0.001  \\
FestDPO & \textbf{7.435 $\pm$  0.384} &  0.294 ± 0.002  \\
\bottomrule
\end{tabular}%
}
\end{minipage}
\hfill
\begin{minipage}[t]{0.53\textwidth}
\vspace{-10pt}
\centering
\includegraphics[width=\linewidth]{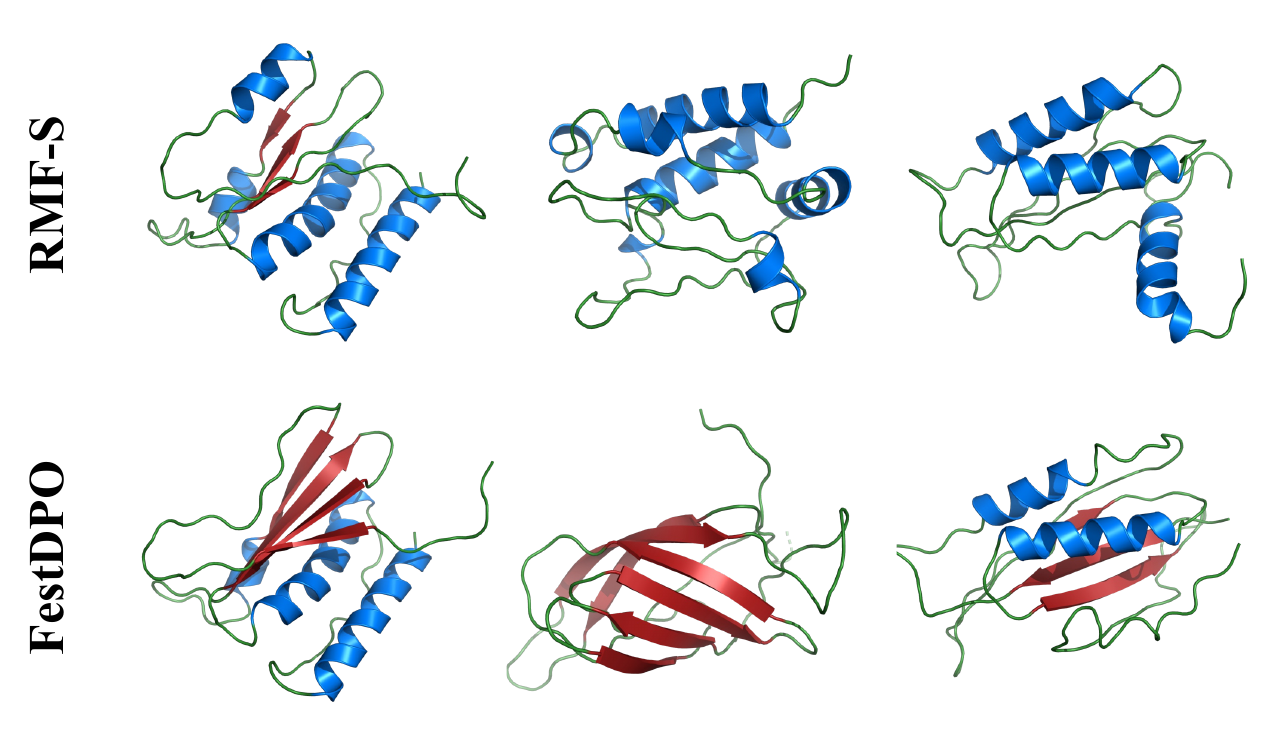}
\vspace{-10pt}
\captionof{figure}{Qualitative comparison between generated protein backbone structures.}
\label{fig:protein}
\end{minipage}

\end{table*}
\section{Discussion}
\paragraph{Conclusion.}
We introduce FestDPO, a sample-based direct preference optimization method for few-step generative models. FestDPO approximates DPO objective with sample-based nonparametric estimation, exploiting the fast sampling speed of few-step generative models. We theoretically show that the FestDPO objective converges in probability to the DPO objective as the number of samples grows, and that its minimizer is consistent. In a 1-D toy experiment, FestDPO learns to sample from the unnormalized target density across diverse few-step generative models, showing the agnosticism of FestDPO. In text-to-image generation, FestDPO achieves the highest win rates against the base model across Pick-a-Pic v2, Parti-Prompts, and HPSv2 datasets and receives the highest human ratings. In protein backbone generation, FestDPO achieves a higher $\beta$-strand fraction and lower scRMSD than both the base model and baseline.

\paragraph{Limitations.}
FestDPO employs sample-based nonparametric density estimation, which is asymptotically consistent as the number of samples grows. In practice, however, we empirically show that sample-based estimation with finite samples is effective for preference optimization, also in high-dimensional domains such as images and proteins. Since density estimation directly in the high-dimensional data space is challenging, FestDPO performs it in the feature space of a pretrained encoder, which captures the underlying structure of the data. While this design introduces a dependence on the choice of feature encoder, our ablation study shows that FestDPO remains robust across different pretrained encoders.


\clearpage
\bibliography{iclr2027_conference}
\bibliographystyle{iclr2027_conference}

\clearpage

\appendix

\section{Optimal Distribution of the KL-Regularized Objective}
\label{app:optimal distribution}
Following \citet{rafailov2023direct}, we first derive the optimal policy of the KL-regularized reward maximization objective. 
\begin{equation}
\label{app:kl objective}
    \max _\pi \mathbb{E}_{x \sim \mathcal{D}, y \sim \pi(\cdot \mid x)}[r(x, y)]-\beta \mathbb{D}_{\mathrm{KL}}\left[\pi(y \mid x) \| \pi_{\mathrm{ref}}(y \mid x)\right]
\end{equation}
where $r^{\star}(x,y)$ denotes the latent reward function, $\pi_{\mathrm{ref}}$ the reference policy, and $\pi$ the policy to be optimized.
For $\beta>0$, the objective in \Cref{app:kl objective} can be rewritten as
\begin{align}
    \max _\pi \mathbb{E}_{x \sim \mathcal{D}, y \sim \pi(\cdot \mid x)} & {[r(x, y)]-\beta \mathbb{D}_{\mathrm{KL}}\left[\pi(y \mid x) \| \pi_{\mathrm{ref}}(y \mid x)\right] } \\
    & =\max _\pi \mathbb{E}_{x \sim \mathcal{D}} \mathbb{E}_{y \sim \pi(\cdot \mid x)}\left[r(x, y)-\beta \log \frac{\pi(y \mid x)}{\pi_{\mathrm{ref}}(y \mid x)}\right] \\
    & =\min _\pi \mathbb{E}_{x \sim \mathcal{D}} \mathbb{E}_{y \sim \pi(\cdot \mid x)}\left[\log \frac{\pi(y \mid x)}{\pi_{\mathrm{ref}}(y \mid x)}-\frac{1}{\beta} r(x, y)\right] \\
    & =\min _\pi \mathbb{E}_{x \sim \mathcal{D}} \mathbb{E}_{y \sim \pi(\cdot \mid x)}\left[\log \frac{\pi(y \mid x)}{\frac{1}{Z(x)} \pi_{\mathrm{ref}}(y \mid x) \exp \left(\frac{1}{\beta} r(x, y)\right)}-\log Z(x)\right]
\end{align}
where the partition function denotes $Z(x)=\int\pi_{\mathrm{ref}}(y \mid x)\exp\left(\frac{1}{\beta}r^{\star}(x,y)\right)dy$. Because the partition function does not depend on the policy $\pi$, we can define the closed-form solution as
\begin{equation}
\label{eq:kl optimal distribution}
    \pi^\star(y \mid x)=\frac{1}{Z(x)} \pi_{\mathrm{ref}}(y \mid x) \exp \left(\frac{1}{\beta} r(x, y)\right).
\end{equation}
Rearranging \Cref{eq:kl optimal distribution}, the reward can be expressed in terms of the corresponding optimal policy:
\begin{equation}
\label{eq:app_implicit_reward}
    r^{\star}(x,y)    =    \beta    \log    \frac{\pi^\star(y \mid x)}    {\pi_{\mathrm{ref}}(y \mid x)}    +    \beta \log Z(x).
\end{equation}

Under the Bradley--Terry preference model, the probability that $y_w$ is preferred over $y_l$ is given by
\begin{equation}
\label{eq:app_bt}
    P(y_w \succ y_l \mid x)    =    \sigma\left(    r^{\star}(x,y_w)-r^{\star}(x,y_l)    \right).
\end{equation}
Substituting \Cref{eq:app_implicit_reward} into \Cref{eq:app_bt}, the partition function $Z(x)$ cancels, yielding
\begin{equation}
\label{eq:app_dpo_preference}
    P(y_w \succ y_l \mid x)    =    \sigma\left[    \beta    \log    \frac{\pi^\star(y_w \mid x)}    {\pi_{\mathrm{ref}}(y_w \mid x)}
    -    \beta    \log    \frac{\pi^\star(y_l \mid x)}    {\pi_{\mathrm{ref}}(y_l \mid x)}    \right].
\end{equation}
DPO objective is then obtained by parameterizing this preference probability and maximizing its likelihood on the preference dataset. Therefore, under the Bradley--Terry model, DPO objective and the KL-regularized reward maximization objective share the same optimal policy, given by the reward-tilted distribution in \Cref{eq:kl optimal distribution} \citep{rafailov2023direct}.

\section{proof of \Cref{eq:proposition1}}
\label{app:proof of proposition1}
\begin{proof}
For simplicity, define the normalized KDEs using $N$ samples:
\begin{align}
\hat{\pi}_{\theta}(y \mid x)=\frac{1}{N}\sum_{n=1}^N k_h\left(y, y^{(n)}\right), \quad 
\hat{\pi}_{\text{ref}}(y \mid x)=\frac{1}{N}\sum_{n=1}^N k_h\left(y, y^{(n)}_{\text{ref}}\right),
\end{align}
where $y^{(n)}\sim \pi_\theta(\cdot \mid x)$ and $ y^{(n)}_{\text{ref}}\sim \pi_{\text{ref}}(\cdot \mid x)$, and $k_h(y,z)=h^{-d}K((y-z)/h)$ for $y, z \in \mathbb{R}^d$, where $K$ is a strictly positive bounded Borel kernel satisfying the standard conditions for pointwise KDE consistency \citep{devroye1985nonparametric,silverman2018density}.

As $N \to \infty$, suppose the bandwidth $h$ satisfies  $h \to 0$, $Nh^d \to \infty$. Under these conditions, for almost every $(x,y_w,y_l) \sim \mathcal D$ at which the corresponding densities are continuous  and strictly positive at $y_w$ and $y_l$, the KDE consistency property  \citep{devroye1985nonparametric,silverman2018density} gives
\begin{align}
    \hat{\pi}_{\theta}(y \mid x)
    &\xrightarrow{p}
    \pi_{\theta}(y \mid x) \ \ \text{and} \ \    \hat{\pi}_{\text{ref}}(y \mid x)
    \xrightarrow{p} 
    \pi_{\text{ref}}(y \mid x), \quad \text{as} \ \ N\rightarrow\infty.
\end{align}

By the continuous mapping theorem \citep{van2000asymptotic}, the log-density ratios satisfy
\begin{align}
    \log\frac{\hat{\pi}_{\theta}(y_w\mid x)}{\hat{\pi}_{\mathrm{ref}}(y_w\mid x)}
    &\xrightarrow{p}
    \log\frac{{\pi}_\theta(y_w\mid x)}{\pi_{\mathrm{ref}}(y_w\mid x)}, \quad
    \log\frac{\hat{\pi}_{\theta}(y_l\mid x)}{\hat{\pi}_{\mathrm{ref}}(y_l\mid x)}
    \xrightarrow{p}
    \log\frac{\pi_\theta(y_l\mid x)}{\pi_{\mathrm{ref}}(y_l\mid x)},
\end{align}
Combining the above,
\begin{align}
    &
    \beta
    \log
    \frac{\hat{\pi}_{\theta }(y_w\mid x)}{\hat{\pi}_{\mathrm{ref} }(y_w\mid x)}-\beta\log\frac{\hat{\pi}_{\theta }(y_l\mid x)}{\hat{\pi}_{\mathrm{ref} }(y_l\mid x)}
    \nonumber
    \quad\xrightarrow{p}\quad
    \beta\log\frac{\pi_\theta(y_w\mid x)}{\pi_{\mathrm{ref}}(y_w\mid x)}-\beta\log\frac{\pi_\theta(y_l\mid x)}{\pi_{\mathrm{ref}}(y_l\mid x)}.
\end{align}

Since $z\mapsto-\log\sigma(z)$ is continuous, the continuous mapping theorem yields
\begin{align}
    {\ell}_{\mathrm{FestDPO}}(x,y_w,y_l)
    \xrightarrow{p}
    \ell_{\mathrm{DPO}}(x,y_w,y_l).
\end{align}
By assumption, the sequence of per-example FestDPO losses is uniformly integrable. For instance, this assumption holds if the corresponding log-density ratios are uniformly bounded in $N$, which is guaranteed when the estimated densities are uniformly bounded above and bounded away from zero at the evaluated samples. Together with the convergence in probability established above, this implies convergence in $L^1$ \citep{pollard2002user}:
\begin{equation}
\mathbb{E}\left[
\left|
\ell_{\mathrm{FestDPO}}(x,y_w,y_l)
-
\ell_{\mathrm{DPO}}(x,y_w,y_l)
\right|
\right]
\longrightarrow 0.
\end{equation}
Using the inequality $|\mathbb{E}[X]|\leq \mathbb{E}[|X|]$, we then obtain
\begin{align}
    \left|
    \mathcal{L}_{\mathrm{FestDPO}}(\theta)
    -
    \mathcal{L}_{\mathrm{DPO}}(\theta)
    \right|
    &=
    \left|
    \mathbb{E}\left[
    \ell_{\mathrm{FestDPO}}
    -
    \ell_{\mathrm{DPO}}
    \right]
    \right| \\
    &\leq
    \mathbb{E}\left[
    \left|
    \ell_{\mathrm{FestDPO}}
    -
    \ell_{\mathrm{DPO}}
    \right|
    \right]
    \longrightarrow 0.
\end{align}
Hence, the FestDPO objective converges to the DPO objective:
\begin{equation}
    \mathcal{L}_{\mathrm{FestDPO}}(\theta)
    \longrightarrow
    \mathcal{L}_{\mathrm{DPO}}(\theta).
\end{equation}

\end{proof}

\section{proof of \Cref{eq:proposition2}}
\label{app:proof of proposition2}

\begin{proof}
Let $\theta^\star = \arg\min_{\theta\in\Theta} \mathcal{L}_{\mathrm{DPO}}(\theta)$ denote the minimizer of DPO objective. By the assumed uniform version of \Cref{eq:proposition1} and the well-separatedness assumption, we have
\begin{equation}
    \label{app:uniform version of proposition1}
    \sup_{\theta\in\Theta}
    \left|
    \mathcal{L}_{\mathrm{FestDPO}}(\theta)
    -
    \mathcal{L}_{\mathrm{DPO}}(\theta)
    \right|
    \xrightarrow{p}0,
\end{equation}
and, for every $\epsilon>0$,
\begin{equation}
    \inf_{\theta:\,d(\theta,\theta^\star)\geq\epsilon}
    \mathcal{L}_{\mathrm{DPO}}(\theta)
    >
    \mathcal{L}_{\mathrm{DPO}}(\theta^\star),
\end{equation}
where the second condition is the minimization analogue of the condition in \citet[Theorem~5.7]{van2000asymptotic}.
Since $\hat\theta^{\star}$ is an $o_p(1)$-approximate minimizer of FestDPO,
\begin{equation}
\mathcal{L}_{\mathrm{FestDPO}}(\hat{\theta}^{\star}) \leq \mathcal{L}_{\mathrm{FestDPO}}(\theta^{\star})+o_p(1) 
\end{equation}
where $o_p(1)$ denotes a sequence of random variables that converges to zero in probability as $N\to\infty$. 
By \Cref{app:uniform version of proposition1} above,
\begin{equation}
\mathcal{L}_{\mathrm{FestDPO}}(\hat{\theta}^{\star}) \leq \mathcal{L}_{\mathrm{DPO}}(\theta^{\star})+o_p(1) 
\end{equation}
\begin{align}
    \mathcal{L}_{\mathrm{DPO}}(\hat{\theta}^{\star})- \mathcal{L}_{\mathrm{DPO}} (\theta^{\star})
    & \leq  \mathcal{L}_{\mathrm{DPO}}(\hat{\theta}^{\star})- \mathcal{L}_{\mathrm{FestDPO}}(\hat{\theta}^{\star})+o_p(1) \\
    & \leq \sup _\theta    \left|
    \mathcal{L}_{\mathrm{FestDPO}}(\theta)
    -
    \mathcal{L}_{\mathrm{DPO}}(\theta)
    \right|+o_p(1) \xrightarrow{\mathrm{P}} 0 .
\end{align}
Since $\theta^\star$ minimizes the population DPO objective, the left-hand side is nonnegative. Hence,
\begin{equation}
    \mathcal{L}_{\mathrm{DPO}}(\hat{\theta}^{\star})
    -
    \mathcal{L}_{\mathrm{DPO}}(\theta^\star)
    \xrightarrow{p}0.
\end{equation}
By the well-separatedness condition, this implies \citep{van2000asymptotic}
\begin{equation}
    d(\hat{\theta}^{\star},\theta^\star)
    \xrightarrow{p}0.
\end{equation}
\end{proof}

\section{Baseline Details}
\label{app:baseline-details}

\subsection{PSO}
\label{app:pso}

PSO \citep{miao2025tuning} fine-tunes a distilled model $p_\theta(x_0 \!\mid\! c)$ of $N = 1 \sim 4$ steps. Since applying diffusion loss makes blurry outputs, it  maximizes a relative margin between a target sample $x^\tau_0 \sim p_{\mathrm{data}}$ and a reference sample $x^\rho_0 \sim p_\theta$ from the model being tuned, regularized by the pretrained distilled model $p_{\mathrm{pre}}$ with weight $\beta$:
\begin{equation}
\label{eq:pso loss}
  \mathcal{L}
  = -\,\mathbb{E}\,\left[
    \log \sigma\!\left(
      \beta \log \frac{p_\theta(x^\tau_0 \mid c)}{p_{\mathrm{pre}}(x^\tau_0 \mid c)}
      - \beta \log \frac{p_\theta(x^\rho_0 \mid c)}{p_{\mathrm{pre}}(x^\rho_0 \mid c)}
    \right)\right],
\end{equation}
where $c$ is condition. In \Cref{eq:pso loss}, Evaluating $p_\theta(x_0 \mid c)$ would require marginalizing over the intermediate states which is generally intractable. PSO therefore focus on the relative margin to joint likelihoods of whole trajectories. Specifically, the forward diffusion process yields the data trajectory, and the reverse generative process yields the reference one:
\begin{equation}
  \mathcal{L}_{\mathrm{PSO}}
  = -\,\mathbb{E}\!\left[\log \sigma\!\left(\beta \sum_n \left(
      \log \frac{p_\theta(x^\tau_{t_{n-1}} \mid x^\tau_{t_n}, c)}{p_{\mathrm{pre}}(x^\tau_{t_{n-1}} \mid x^\tau_{t_n}, c)}
      - \log \frac{p_\theta(x^\rho_{t_{n-1}} \mid x^\rho_{t_n}, c)}{p_{\mathrm{pre}}(x^\rho_{t_{n-1}} \mid x^\rho_{t_n}, c)}
    \right)\right)\right].
  \label{eq:pso-traj}
\end{equation}
PSO makes these transition densities explicit by casting few-step denoising as an MDP with state $(x_{t_n}, t_n)$, action $x_{t_{n-1}}$, and Gaussian policy $\mathcal{N}(\mu_\theta, \sigma^2_{t_n} I)$. By substituting the MDP action-state conditional distribution, final objective for PSO is
\begin{align}
\mathcal{L}_{\text {PSO }}=-\mathbb{E} & {\left[\operatorname { l o g } \sigma \left(-\beta \cdot \sum_{n=2}^N\left(\left(\left\|\epsilon^\tau-\epsilon_\theta\left(x_{t_n}^\tau, t_n, c\right)\right\|^2-\left\|\epsilon^\tau-\epsilon_{\text {pre }}\left(x_{t_n}^\tau, t_n, c\right)\right\|^2\right)\right.\right.\right.} \notag \\
& \left.\left.\left.-\frac{1}{2 \sigma_{t_n}^2}\left(\left\|x_{t_{n-1}}^\rho-\mu_\theta\left(x_{t_n}^\rho, t_n, c\right)\right\|^2-\left\|x_{t_{n-1}}^\rho-\mu_{\text {pre }}\left(x_{t_n}^\rho, t_n, c\right)\right\|^2\right)\right)\right)\right].
  \label{eq:pso-final}
\end{align}
In our offline setting, the reference samples are pre-collected offline dataset instead of being drawn from the current model, and their trajectories are approximated by the diffusion forward process. A detailed explanation is provided in \cite{miao2025tuning}.

\subsection{DrPO}
\label{app:drpo}
DrPO \citep{jiang2026drifting} aligns a deterministic one-step generator $x = g_\theta(\epsilon, c)$ by kernel $k$.  Given positive $\mathcal{A^+}$ and negative feature samples $\mathcal{A^-}$, they define a non-parametric (dipole) reward with  strength $\gamma$:
\begin{equation}
    R_{\text {dipole }}(z)=\exp (E(z)), \quad E(z)=\gamma \sum_{j=1}^M\left[k\left(z, a_j^{+}\right)-k\left(z, a_j^{-}\right)\right],
  \label{eq:dipole}
\end{equation}
where  $a^+_j\in\mathcal{A^+}$ and $a^-_j\in\mathcal{A^-}$ denotes positive and negative sample. Assuming non-negative differentable reward $R(z)$, given $x=g_\theta(\epsilon,c)$ and $z=\phi(x)$, the pathwise gradient of the reference-regularized soft RL objective is
\begin{equation}
  \nabla_\theta J
  = \mathbb{E}_\epsilon\!\left[
      \Big(\nabla_z \log R(z)
      + \lambda \big(\nabla_z \log p_{\mathrm{pre}}(z) - \nabla_z \log p_\theta(z)\big)\Big)
      \nabla_\theta z \right],
  \label{eq:drpo-pathwise}
\end{equation}
where $\nabla_\theta z=\nabla_x\phi(x) \nabla_\theta g_\theta (\epsilon,c)$ and $\phi$ denotes the feature encoder. The reward score term reduces to $V_{\mathrm{pref}} = \nabla_z \log R_{\mathrm{dipole}}(z)$, and the reference term can be represented by a mean-shift difference between reference features $\mathcal{R}$ and current model features $\mathcal{Z}$. Setting $V_{\mathrm{DrPO}} = V_{\mathrm{pref}} + \lambda (\hat{\mu}_{\mathcal{R}} - \hat{\mu}_{\mathcal{Z}})$,
the generator is trained as in drifting models by regressing each feature onto a stop-gradient target with velocity scale $\eta$:
\begin{equation}
  z^\star_i = \mathrm{sg}\big(z_i + \eta\, V_{\mathrm{DrPO}}(z_i)\big),
  \qquad
  \mathcal{L}_{\mathrm{DrPO}}
  = \frac{1}{2K} \sum_{i=1}^{K} \big\lVert z_i - z^\star_i \big\rVert_2^2 .
  \label{eq:drpo-loss}
\end{equation}
 In our offline setting, we replace the online feature sets $\mathcal{A^+}$ and $\mathcal{A^-}$ with features of positive and negative samples from the dataset. A detailed explanation is provided in \cite{jiang2026drifting}.

\section{Experiments Details}
\label{app:experiments details}
\subsection{Toy Experiments}
\label{app:toy experiments}
\begin{figure}[!ht]
    \centering
    \includegraphics[width=1\linewidth]{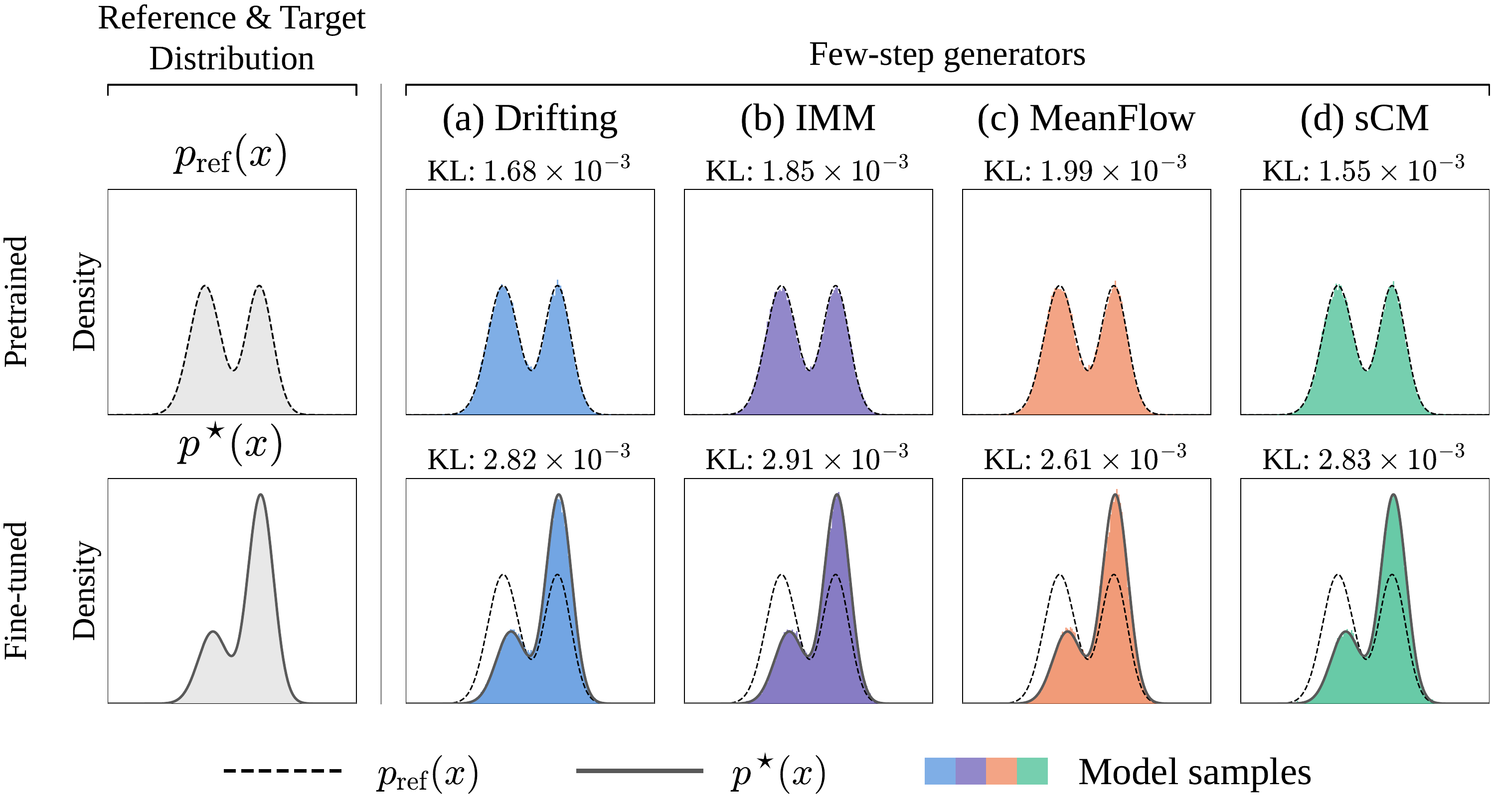}
    \captionsetup{skip=2pt}
    \caption{\textbf{FestDPO aligns diverse few-step generators with the target distribution.} Left of the solid line: the reference distribution $p_{\text{ref}}(x)$, a mixture of Gaussians, and the reward-tilted target distribution $p^\star(x) \propto p_{\text{ref}}(x)\exp(r^{\star}(x))$. Right of the solid line: distributions of (a) Drifting model, (b) IMM, (c) MeanFlow, and (d) sCM) before (top) and after (bottom) fine-tuning with FestDPO. Dashed and solid curves denote $p_{\text{ref}}(x)$ and $p^\star(x)$, respectively. We report the KL divergence (KL) between the fine-tuned and target distributions.}
    \label{fig:app toy}
    \vspace{-10pt}
\end{figure}

\paragraph{Reference, Reward, and Target distribution.}
The reference distribution $p_{\text{ref}}(x)$ is a two-component Gaussian mixture with means $(-0.625, 0.25)$, variances $(0.0625, 0.046875)$, and weights $(0.5364, 0.4636)$. The reward $r^{\star}(x)$ is a Gaussian density $\mathcal{N}(x; 0.5, 0.5)$, which peaks slightly to the right of the right mode of $p_{\text{ref}}(x)$. The target distribution is the reward-tilted distribution $p^\star(x) \propto p_{\text{ref}}(x)\exp(r^{\star}(x))$, which is itself a Gaussian mixture available in closed form. Hence, forward KL divergence is computed against the exact target.

\paragraph{Dataset.} 
For pre-training, we draw $10^5$ i.i.d.\ samples from $p(x)$ once and use this fixed dataset for all methods. For fine-tuning, we construct a fixed offline preference dataset of $10^5$ pairs, shared across all methods, by drawing two independent samples from the reference model and labeling them with the Bradley-Terry model, i.e., $y_a$ is preferred with probability $\sigma(\log r(y_a) - \log r(y_b))$

\paragraph{Pre-training.}
All methods share the same backbone: a time-conditioned residual MLP with 1-dimensional input and output, a 128-dimensional sinusoidal time embedding, SiLU activations, and four residual blocks. The hidden width is 256 (512 for sCM). The pre-training hyperparameters are summarized in \Cref{tab:gmm-pretrain}.

\begin{table}[!ht]
\centering
\small
\caption{Pre-training hyperparameters.}
\label{tab:gmm-pretrain}
\begin{tabular}{lcccc}
\toprule
& Drifting & IMM & MeanFlow & sCM  \\
\midrule
Training steps  & 300k & 100k & 300k & 100k \\
Optimizer       & AdamW & RAdam & Adam & AdamW \\
Learning rate   & $2\times10^{-4}$ & $1\times10^{-4}$ & $2\times10^{-4}$ & $3\times10^{-5}$ \\
$(\beta_1, \beta_2)$ & (0.9, 0.95) & (0.9, 0.999) & (0.9, 0.999) & (0.9, 0.99) \\
Weight decay    & 0.01 & 0 & 0 & 0.01 \\
Batch size      & 1024 & 4096 & 1024 & 16384 \\
Gradient clip   & 2.0 & -- & -- & 10.0 \\
EMA decay       & 0.999 & 0.9999 & 0.9999 & 0.999 \\
\# Parameters   & 0.53M & 0.92M & 0.92M & 3.48M \\
\bottomrule
\end{tabular}
\end{table}
\paragraph{Fine-tuning with FestDPO.}
Both log-densities $\pi_{\text{ref}}$ and $\pi_\theta$ are estimated with a Gaussian KDE (\Cref{eq : density approximation}): $\log \hat\pi_\theta$ and $\log \hat\pi_{\text{ref}}$ are computed from $N$ samples respectively. The loss is averaged over multiple KDE bandwidths, listed in \Cref{tab:gmm-finetune}. Averaging over bandwidths at multiple scales provides informative gradients commonly used in kernel-based distribution matching \citep{binkowski2018demystifying}. Under this setup, the optimum of the DPO objective with $\beta = 1$ is exactly $p^\star(x) \propto p(x)\,r(x)$. The fine-tuning hyperparameters are summarized in \Cref{tab:gmm-finetune}.

\begin{table}[!ht]
\centering
\small
\caption{Fine-tuning hyperparameters of FestDPO.}
\label{tab:gmm-finetune}
\begin{tabular}{lcccc}
\toprule
& Drifting & IMM & MeanFlow & SCM \\
\midrule
Fine-tuning steps       & 4,000 & 4,000 & 8,000 & 4,000 \\
Learning rate & $1\times10^{-4}$ & $1\times10^{-4}$ & $3\times10^{-5}$ & $1\times10^{-4}$ \\
$\beta$                 & 1.0 & 1.0 & 1.0 & 1.0 \\
KDE bandwidths          & 0.02 / 0.06 / 0.2 & 0.02 / 0.06 / 0.2 & 0.01 / 0.03 / 0.1 & 0.02 / 0.06 / 0.2 \\
\# Samples ($M = N$)    & 2048 & 2048 & 8192 & 2048 \\
Pairs per step          & 512 & 512 & 512 & 512 \\
Gradient clip           & 1.0 & 1.0 & 1.0 & 1.0 \\
\bottomrule
\end{tabular}
\end{table}

\paragraph{Sampling.}
We use 1 sampling step for Drifting and 4 steps for IMM, MeanFlow, and sCM, both during fine-tuning and at evaluation. All samples are generated with EMA weights, and each reported metric is computed from $10^5$ samples per checkpoint.

\subsection{Text-to-image generation}
\label{app:T2I implementation details}
\subsubsection{Datasets Details}
In this section, we provide a detailed explanation of the datasets.

\textbf{Pick-a-Pic v2} \citep{kirstain2023pick}: Pick-a-Pic v2 collects human preferences from users of a text-to-image web application, where each sample pairs a prompt with two generated images and a preference label. It provides 851,293 pairs across 58,960 prompts, with images generated by Stable Diffusion 2.1, Dreamlike Photoreal 2.0, and Stable Diffusion XL variants \citep{rombach2022high} using diverse classifier-free guidance scales \citep{ho2022classifier}. For evaluation, we use all 500 prompts in the test split.

\textbf{Parti-Prompts} \citep{yu2022scaling}: Parti-Prompts (P2) is a dataset of 1,632 English prompts designed to probe text-to-image models across a broad range of capabilities. Each prompt is annotated with two labels: a \emph{Category}, which specifies its broad subject domain, and a \emph{Challenge}, which identifies the aspect that makes the prompt difficult to render faithfully.  For evaluation, we construct a subset of 500 prompts by sampling an equal number of prompts from each category.

\textbf{HPSv2} \citep{wu2023human}: HPSv2 contains approximately 798K human preference choices over 434K image pairs generated by a variety of text-to-image models, together with real images. It also provides a benchmark of 3,200 prompts evenly divided into four styles: Animation, Concept-art, Painting, and Photo. For evaluation, we sample 125 prompts from each style, resulting in 500 prompts in total.

\subsubsection{Implementation details}
In this section, we describe the experimental setup and implementation details of FestDPO. We apply LoRA \citep{hu2021lora} fine-tuning to the base few-step generator (SDXL-Turbo and SDXL-DMD2), using a rank of 16 and a scaling factor of 1 across all experiments, and conduct all main experiments with single-step generation (NFE $=1$). We use the AdamW optimizer \citep{kingma2014adam} with a prompt batch size of 512. The learning rate is initialized at $3 \times 10^{-4}$ and decayed to $1 \times 10^{-6}$ following a cosine schedule over the first 300 steps, followed by a constant learning rate of $1 \times 10^{-6}$. The regularization coefficient $\beta$ is fixed to 0.5 throughout all experiments. For each prompt, FestDPO constructs the KDE from $N=12$ generated samples. Since the density is estimated in the feature space of a pretrained encoder (Latent MAE by default), we find a simple Gaussian kernel to be sufficient, and the kernel bandwidth is tuned separately for each base model. All experiments are conducted on 8 NVIDIA RTX 3090 GPUs. Further details on the hyperparameter settings  can be found in \Cref{tab:hyperparameters}.
\begin{table}[!ht]
\centering
\caption{Hyperparameters for FestDPO.}
\label{tab:hyperparameters}
\resizebox{0.6\linewidth}{!}{
\begin{tabular}{lcc}
\toprule
& SDXL-Turbo & SDXL-DMD2 \\
\midrule
KDE bandwidth $h$ & 0.15, 0.2 & 0.2, 0.3 \\
\midrule
$\beta$                  & \multicolumn{2}{c}{0.5} \\
\# of KDE samples $N$    & \multicolumn{2}{c}{12} \\
LoRA rank / scale        & \multicolumn{2}{c}{16 / 1} \\
Optimizer                & \multicolumn{2}{c}{AdamW ($\beta_1=0.9, \beta_2=0.999$)} \\
Weight decay             & \multicolumn{2}{c}{0.01} \\
Initial learning rate    & \multicolumn{2}{c}{$3 \times 10^{-4}$} \\
Final learning rate      & \multicolumn{2}{c}{$1 \times 10^{-6}$} \\
Gradient clip norm       & \multicolumn{2}{c}{1.0} \\
Max steps                & \multicolumn{2}{c}{1,000} \\
Prompt batch size        & \multicolumn{2}{c}{512} \\
\bottomrule
\end{tabular}
}
\end{table}

\textbf{Baseline implementation details.}
Building on the official implementations of PSO and DrPO, we adapt both methods to our offline setting and tune their key hyperparameters as described below.
\begin{itemize}
    \item \textbf{PSO} \citep{miao2025tuning}: Pairwise Sample Optimization fine-tunes timestep-distilled diffusion models by increasing the relative likelihood of preferred images over reference images, where the likelihood is approximated by the denoising loss along the diffusion trajectory. Following \citet{miao2025tuning}, we tune the learning rate over $\{10^{-5}, 10^{-4}\}$ and the regularization coefficient $\beta$ over $\{1, 10, 50, 100\}$, and select $LR=10^{-4}$ and $\beta = 10$.
    \item \textbf{DrPO} \citep{jiang2026drifting}: DrPO aligns few-step generators by constructing a drift field from preference pairs, which attracts generated samples toward preferred samples and repels them from dispreferred ones, with a regularization term weighted by $\lambda$ that keeps the model close to the reference. We tune the learning rate over $\{10^{-5}, 10^{-4}\}$ and $\lambda$ over $\{0.1, 0.2, 0.3, 0.5\}$, and select $LR=10^{-4}$ and $\lambda = 0.1$. For a fair comparison, we use the same feature encoder (Latent MAE) and kernel bandwidth as FestDPO.
\end{itemize}

\subsection{Protein backbone generation}
\label{app:protein implementation details}
\paragraph{Base model and optimization targets.}
We use the RMF-S variant of Riemannian MeanFlow \citep{woo2026riemannian} as the base generator and frozen reference model. We fine-tune separate generators for two optimization targets: SS-match, adapted from \citet{su2026iterative}, and self-consistency root mean square deviation (scRMSD). The SS-match reward favors a higher fraction of residues in $\beta$-strands. We assign secondary-structure labels using DSSP \citep{kabsch1983dictionary} and group them into helices, strands, and coils. For scRMSD, we use ProteinMPNN \citep{dauparas2022robust} to design an amino-acid sequence for each generated backbone and ESMFold \citep{lin2023evolutionary} to predict its structure. We then compute the C$\alpha$ RMSD between the generated and predicted backbones after structural alignment, favoring lower values as a proxy for structural designability.

\paragraph{Model checkpoints.}
We use the RMF-S checkpoint and its accompanying config from the released weights archive\footnote{\url{https://zenodo.org/records/18582218}}, together with the official Protein-RMF implementation\footnote{\url{https://github.com/dongyeop3813/Protein-RMF}}. For both inverse folding and feature extraction, we use the C$\alpha$-only ProteinMPNN checkpoint from the official repository\footnote{\url{https://github.com/dauparas/ProteinMPNN}}. This checkpoint uses 48 neighbors and was trained with coordinate noise of $0.20\text{\AA}$. The ProteinMPNN encoder remains frozen during FestDPO training. For refolding, we use the \texttt{facebook/esmfold\_v1} checkpoint from Hugging Face\footnote{\url{https://huggingface.co/facebook/esmfold_v1}}.

\subsubsection{Preference Dataset Construction}
For each optimization target, we construct a separate offline preference dataset from a pool of $8{,}192$ backbones of length $128$ generated by the frozen RMF-S reference model. Pairs are formed only within the corresponding pool. The preference datasets remain fixed during fine-tuning, which requires no additional reward evaluations or preference annotations.
\paragraph{SS-match.}
We generate the pool using single-step sampling and score each backbone using the ceiling-normalized P-SEA $\beta$-strand fraction. For backbones of length 128, one additional strand residue changes the score by 1/(128-4)$\approx$0.0081. We require a preference gap of at least 2/(128-4)$\approx$0.0161, corresponding to two residues, to reduce sensitivity to single-residue differences at secondary-structure boundaries. The reference pool has a mean strand fraction of 0.152 with a standard deviation of 0.107.
\paragraph{scRMSD.}
We generate a separate pool using ten-step sampling to include backbones with better self-consistency than those obtained through single-step generation. Each backbone is scored by its negative scRMSD, so that higher scores indicate stronger self-consistency. The pool has a median scRMSD of 5.30\text{\AA}, and 7.4\% of its backbones satisfy the designability criterion of scRMSD below $2.0,\text{\AA}$. We require an scRMSD gap of at least 0.1\text{\AA} between paired backbones.
\paragraph{Pair construction.}
We greedily form pairs satisfying the target-specific gap threshold and allow each backbone to appear in at most four pairs, limiting repeated use of individual samples. This procedure yields 8000 pairs per target. Within each pair, the backbone with the higher SS-match score or lower scRMSD is labeled preferred.

\subsubsection{Implementation details}
We fine-tune all $16.35$ million parameters of RMF-S without adapters. Fine-tuning and evaluation use single-step generation ($K=1$, NFE $=2$) with a backbone length of $128$ residues. We use AdamW \citep{loshchilov2017decoupled} with a constant learning rate of $10^{-6}$ for $1{,}000$ updates, using $64$ preference pairs per update. At each update, FestDPO constructs density estimates from $64$ samples drawn from the trainable generator and $64$ samples drawn from the frozen reference generator. Both sample sets are regenerated at every update. We apply a Gaussian kernel in the $128$-dimensional feature space of the frozen ProteinMPNN encoder. \Cref{tab:hyperparameters-protein} lists the hyperparameters and candidate values considered.  Experiments use a single NVIDIA RTX 5090 GPU and take approximately 2 hours.

\begin{table}[!ht]
\centering
\small
\caption{Hyperparameters and candidate values for FestDPO on protein backbone generation.}
\label{tab:hyperparameters-protein}
\setlength{\tabcolsep}{5pt}
\begin{tabular}{lcc}
\toprule
Hyperparameter & SS-match & scRMSD \\
\midrule
KDE bandwidth $\sigma$ & $ 0.5 $ & $0.05$ \\
DPO coefficient $\beta$ & $ 2000 $ & $20$ \\
\midrule
Base model & \multicolumn{2}{c}{RMF-S} \\
KDE samples per update & \multicolumn{2}{c}{$64$ trainable / $64$ reference} \\
Optimizer & \multicolumn{2}{c}{AdamW} \\
AdamW $(\beta_1,\beta_2)$ & \multicolumn{2}{c}{$(0.9,0.999)$} \\
Weight decay & \multicolumn{2}{c}{$10^{-12}$} \\
Learning rate & \multicolumn{2}{c}{$10^{-6}$} \\
Gradient clipping norm & \multicolumn{2}{c}{$1000$} \\
Preference pairs per update & \multicolumn{2}{c}{$64$} \\
Training updates & \multicolumn{2}{c}{$1{,}000$} \\
Backbone length & \multicolumn{2}{c}{$128$} \\
\bottomrule
\end{tabular}
\end{table}

\paragraph{DrPO baseline.}
We use a Laplacian kernel for DrPO and consider kernel radii of $(0.02,0.05,0.2)$. We tune the reference loss weight $\lambda$ over the range $[0.01, 0.02, 0.05, 0.1, 0.2, 0.5, 1, 2]$, while fixing the anchor weight at $1.0$. Each update uses $64$ samples from the trainable generator and $64$ reference samples. The remaining training settings, including the learning rate, number of updates, preference batch size, and random seeds, are identical to those used for FestDPO.

\clearpage
\section{Additional Experimental Results}
We evaluate all methods on 500 prompts from each of Pick-a-Pic v2, Parti-Prompts, and HPSv2. For each prompt, we generate images with four different random seeds. In all experiments, we report the mean and standard deviation of PickScore \citep{kirstain2023pick} and Aesthetic score \citep{schuhmann2023laion} over each prompt set.
\label{app:additional experiments}

\subsection{Main Results}
\begin{table}[!ht]
\centering
\small
\setlength{\tabcolsep}{5pt}
\caption{Main results with SDXL-Turbo on Pick-a-Pic v2, Parti-Prompts, HPSv2.}
\label{tab:sdxl-turbo-overall}
\begin{tabular}{lcccccc}
\toprule
& \multicolumn{2}{c}{Pick-a-Pic v2} & \multicolumn{2}{c}{Parti-Prompts} & \multicolumn{2}{c}{HPSv2} \\
\cmidrule(lr){2-3} \cmidrule(lr){4-5} \cmidrule(lr){6-7}
Method & PickScore $\uparrow$ & Aesthetic $\uparrow$ & PickScore $\uparrow$ & Aesthetic $\uparrow$ & PickScore $\uparrow$ & Aesthetic $\uparrow$ \\
\midrule
SDXL-Turbo & 22.37 $\pm$ 0.06 & 6.03 $\pm$ 0.03 & 22.78 $\pm$ 0.04 & 5.69 $\pm$ 0.02 & 22.83 $\pm$ 0.05 & 6.09 $\pm$ 0.02 \\
DrPO       & 22.38 $\pm$ 0.06 & 6.03 $\pm$ 0.03 & 22.78 $\pm$ 0.04 & 5.69 $\pm$ 0.02 & 22.85 $\pm$ 0.05 & 6.09 $\pm$ 0.02 \\
PSO        & 22.46 $\pm$ 0.06 & 6.06 $\pm$ 0.02 & 22.86 $\pm$ 0.04 &  \cellcolor{festblue}\textbf{5.72 $\pm$ 0.02} & \cellcolor{festblue}\textbf{23.22 $\pm$ 0.05} & 6.05 $\pm$ 0.02 \\
FestDPO    &\cellcolor{festblue} \textbf{22.60 $\pm$ 0.07} & \cellcolor{festblue}\textbf{6.07 $\pm$ 0.03} & \cellcolor{festblue}\textbf{22.92 $\pm$ 0.05} & \cellcolor{festblue}\textbf{5.72 $\pm$ 0.02} & 23.18 $\pm$ 0.05 &\cellcolor{festblue} \textbf{6.14 $\pm$ 0.02} \\
\bottomrule
\end{tabular}
\end{table}

\begin{table}[!ht]
\centering
\small
\setlength{\tabcolsep}{5pt}
\caption{Win-rate comparison with SDXL-Turbo on Pick-a-Pic v2, Parti-Prompts, HPSv2.}
\label{tab:sdxl-turbo-win-rate}
\begin{tabular}{lcccccc}
\toprule
& \multicolumn{2}{c}{Pick-a-Pic v2} & \multicolumn{2}{c}{Parti-Prompts} & \multicolumn{2}{c}{HPSv2} \\
\cmidrule(lr){2-3} \cmidrule(lr){4-5} \cmidrule(lr){6-7}
Method & PickScore $\uparrow$ & Aesthetic $\uparrow$ & PickScore $\uparrow$ & Aesthetic $\uparrow$ & PickScore $\uparrow$ & Aesthetic $\uparrow$ \\
\midrule
DrPO    & 51.65 $\pm$ 1.21 & 49.40 $\pm$ 1.27 & 50.20 $\pm$ 1.00 & 53.65 $\pm$ 1.02 & 54.50 $\pm$ 1.13 & 51.30 $\pm$ 1.09 \\
PSO     & 65.60 $\pm$ 1.29 & 56.20 $\pm$ 1.35 & 64.70 $\pm$ 1.09 & 57.05 $\pm$ 1.08 & 77.70 $\pm$ 1.03 & 44.35 $\pm$ 1.33 \\
FestDPO & \cellcolor{festblue}\textbf{73.55 $\pm$ 1.24} & \cellcolor{festblue} \textbf{59.45 $\pm$ 1.27} & \cellcolor{festblue} \textbf{67.65 $\pm$ 1.08} &  \cellcolor{festblue}\textbf{63.05 $\pm$ 1.04} &  \cellcolor{festblue}\textbf{82.25 $\pm$ 0.96} & \cellcolor{festblue} \textbf{61.00 $\pm$ 1.18} \\
\bottomrule
\end{tabular}
\end{table}

\begin{table}[!ht]
\centering
\small
\setlength{\tabcolsep}{5pt}
\caption{Main results with SDXL-DMD2 on Pick-a-Pic v2, Parti-Prompts, HPSv2.}
\label{tab:dmd2-overall}
\begin{tabular}{lcccccc}
\toprule
& \multicolumn{2}{c}{Pick-a-Pic v2} & \multicolumn{2}{c}{Parti-Prompts} & \multicolumn{2}{c}{HPSv2} \\
\cmidrule(lr){2-3} \cmidrule(lr){4-5} \cmidrule(lr){6-7}
Method & PickScore $\uparrow$ & Aesthetic $\uparrow$ & PickScore $\uparrow$ & Aesthetic $\uparrow$ & PickScore $\uparrow$ & Aesthetic $\uparrow$ \\
\midrule
SDXL-DMD2 & 21.06 $\pm$ 1.46 & 5.37 $\pm$ 0.68  & 21.65 $\pm$ 1.17 &5.25 $\pm$ 0.57& 21.45 $\pm$ 1.28 & 5.58 $\pm$ 0.67 \\
DrPO      & 21.17 $\pm$ 1.47 & 5.43 $\pm$ 0.70 & 21.73 $\pm$ 1.19  &  5.31 $\pm$ 0.57 & 21.61 $\pm$ 1.31 & 5.63 $\pm$ 0.68 \\
PSO   & 21.58 $\pm$ 1.54 & 5.66 $\pm$ 0.70 & 22.06 $\pm$ 1.35 & 5.51 $\pm$ 0.64 & 22.05 $\pm$ 1.38 & 5.90 $\pm$ 0.62 \\
FestDPO & \cellcolor{festblue}\textbf{21.98 $\pm$ 1.50} &\cellcolor{festblue}\textbf{5.87 $\pm$ 0.57}  &\cellcolor{festblue}\textbf{22.43 $\pm$ 1.16} & \cellcolor{festblue} \textbf{5.65 $\pm$ 0.52} & \cellcolor{festblue} \textbf{22.42 $\pm$ 1.32} & \cellcolor{festblue} \textbf{5.99 $\pm$ 0.59} \\

\bottomrule
\end{tabular}
\end{table}

\begin{table}[!ht]
\centering
\small
\setlength{\tabcolsep}{5pt}
\caption{Win-rate comparison with SDXL-DMD2 on Pick-a-Pic v2, Parti-Prompts, HPSv2.}
\label{tab:dmd2-win-rate}
\begin{tabular}{lcccccc}
\toprule
& \multicolumn{2}{c}{Pick-a-Pic v2} & \multicolumn{2}{c}{Parti-Prompts} & \multicolumn{2}{c}{HPSv2} \\
\cmidrule(lr){2-3} \cmidrule(lr){4-5} \cmidrule(lr){6-7}
Method & PickScore $\uparrow$ & Aesthetic $\uparrow$ & PickScore $\uparrow$ & Aesthetic $\uparrow$ & PickScore $\uparrow$ & Aesthetic $\uparrow$ \\
\midrule
DrPO       & 67.00 $\pm$ 1.67 & 61.55 $\pm$ 1.98 & 63.15 $\pm$ 2.29 & 63.00 $\pm$ 1.99 & 72.00 $\pm$ 1.21  & 63.35 $\pm$ 1.19 \\
PSO        &  77.00 $\pm$ 0.42 & 76.15 $\pm$ 0.43 & 73.85 $\pm$ 0.44 & 73.70 $\pm$ 0.44  & 78.90 $\pm$ 0.41 & 78.60 $\pm$ 0.41  \\
FestDPO    & \cellcolor{festblue}\textbf{88.30 $\pm$ 0.72} & \cellcolor{festblue}\textbf{87.25 $\pm$ 0.75} & \cellcolor{festblue}\textbf{86.55 $\pm$ 0.76} & \cellcolor{festblue} \textbf{84.60 $\pm$ 0.81} & \cellcolor{festblue} \textbf{90.90 $\pm$ 0.64} &\cellcolor{festblue} \textbf{85.05 $\pm$ 0.80} \\

\bottomrule
\end{tabular}
\end{table}

\subsection{Number of KDE samples}
Since FestDPO employs nonparametric density estimation, we first study the effect of the number of samples $N$ used to construct the KDE. Beyond HPSv2 reported in \Cref{tab:kde sample}, we additionally evaluate on Pick-a-Pic v2 and Parti-Prompts. As shown in \Cref{tab:abl-batch}, increasing $N$ from 8 to 24 consistently improves PickScore across all three benchmarks, while Aesthetic scores remain approximately stable. These results indicate that FestDPO is robust to the choice of $N$, with even a small number of samples yielding substantial gains over the pretrained model.

\begin{table}[!ht]
\centering
\small
\setlength{\tabcolsep}{5pt}
\caption{Sensitivity analysis on the number of KDE samples with SDXL-Turbo.}
\label{tab:abl-batch}
\begin{tabular}{lcccccc}
\toprule
& \multicolumn{2}{c}{Pick-a-Pic v2} & \multicolumn{2}{c}{Parti-Prompts} & \multicolumn{2}{c}{HPSv2} \\
\cmidrule(lr){2-3} \cmidrule(lr){4-5} \cmidrule(lr){6-7}
\# of KDE samples & PickScore $\uparrow$ & Aesthetic $\uparrow$ & PickScore $\uparrow$ & Aesthetic $\uparrow$ & PickScore $\uparrow$ & Aesthetic $\uparrow$ \\
\midrule
SDXL-Turbo  & 22.37 $\pm$ 1.46 & 6.03 $\pm$ 0.61 &22.78 $\pm$ 1.20 & 5.69 $\pm$ 0.57 &22.83 $\pm$ 1.29 & 6.09 $\pm$ 0.69 \\
$N=8$  &22.53 $\pm$ 1.51 & 6.06 $\pm$ 0.62 &22.91 $\pm$ 1.24 & 5.74 $\pm$ 0.58 &  23.06 $\pm$ 1.32 & 6.13 $\pm$ 0.68  \\
$N=12$ & 22.60 $\pm$ 1.52& 6.07 $\pm$ 0.62 &22.92 $\pm$ 1.25 & 5.73 $\pm$ 0.59   &23.13 $\pm$ 1.35 & 6.11 $\pm$ 0.66 \\
$N=24$ & 22.61 $\pm$ 1.50& 6.02 $\pm$ 0.60 &22.94 $\pm$ 1.24 & 5.72 $\pm$ 0.57   &23.14 $\pm$ 1.37 & 6.07 $\pm$ 0.63 \\
\bottomrule
\end{tabular}
\end{table}
\subsection{Feature encoder for KDE}
We next conduct an ablation on the feature encoder used for kernel density estimation. \Cref{tab:abl-feature} compares three pretrained encoders: Latent MAE, CLIP, and DINOv2. FestDPO achieves comparable performance across the three pretrained encoders, demonstrating that its effectiveness is robust to the choice of pretrained encoder. We adopt Latent MAE as the default encoder, as it attains consistently strong results on both PickScore and Aesthetic score, and apply the same encoder to DrPO for a fair comparison.
\begin{table}[!ht]
\centering
\small
\setlength{\tabcolsep}{5pt}
\caption{Ablation on the feature encoder with SDXL-Turbo.}
\label{tab:abl-feature}
\begin{tabular}{lcccccc}
\toprule
& \multicolumn{2}{c}{Pick-a-Pic v2} & \multicolumn{2}{c}{Parti-Prompts} & \multicolumn{2}{c}{HPSv2} \\
\cmidrule(lr){2-3} \cmidrule(lr){4-5} \cmidrule(lr){6-7}
Feature encoder& PickScore $\uparrow$ & Aesthetic $\uparrow$ & PickScore $\uparrow$ & Aesthetic $\uparrow$ & PickScore $\uparrow$ & Aesthetic $\uparrow$ \\
\midrule
SDXL-Turbo  & 22.37 $\pm$ 1.46 & 6.03 $\pm$ 0.61 &22.78 $\pm$ 1.20 & 5.69 $\pm$ 0.57 &22.83 $\pm$ 1.29 & 6.09 $\pm$ 0.69 \\
Latent MAE  & 22.60 $\pm$ 1.52 & 6.07 $\pm$ 0.62 &22.92 $\pm$ 1.25 & 5.73 $\pm$ 0.59 &23.13 $\pm$ 1.35 & 6.11 $\pm$ 0.66 \\
CLIP       &  22.66 $\pm$ 1.48 & 5.99 $\pm$ 0.57 &22.91 $\pm$ 1.22 & 5.72 $\pm$ 0.56 & 23.17 $\pm$ 1.35 & 6.08 $\pm$ 0.64  \\
DINOv2       & 22.54 $\pm$ 1.49 & 6.02 $\pm$ 0.61 &22.92 $\pm$ 1.22 & 5.71 $\pm$ 0.59 & 23.07 $\pm$ 1.34 & 6.08 $\pm$ 0.65 \\
\bottomrule
\end{tabular}
\end{table}

\paragraph{Number of function evaluations (NFE).}
While our main experiments are reported with 1-step generation, we further examine whether FestDPO remains effective in the few-step regime. As shown in \Cref{tab:abl-nfe}, FestDPO with two function evaluations (NFE $=2$) achieves performance comparable to the 1-step setting, with consistent improvements in PickScore across all three benchmarks. This suggests that FestDPO benefits from the improved sample quality afforded by additional function evaluations.
\begin{table}[!ht]
\centering
\small
\setlength{\tabcolsep}{5pt}
\caption{Performance of FestDPO with SDXL-Turbo under different NFE.}
\label{tab:abl-nfe}
\begin{tabular}{lcccccc}
\toprule
& \multicolumn{2}{c}{Pick-a-Pic v2} & \multicolumn{2}{c}{Parti-Prompts} & \multicolumn{2}{c}{HPSv2} \\
\cmidrule(lr){2-3} \cmidrule(lr){4-5} \cmidrule(lr){6-7}
NFE & PickScore $\uparrow$ & Aesthetic $\uparrow$ & PickScore $\uparrow$ & Aesthetic $\uparrow$ & PickScore $\uparrow$ & Aesthetic $\uparrow$ \\
\midrule
1  & 22.60 $\pm$ 1.52& 6.07 $\pm$ 0.62 &22.92 $\pm$ 1.25 & 5.73 $\pm$ 0.59 & 23.13 $\pm$ 1.35 & 6.11 $\pm$ 0.66 \\
2 & 22.63 $\pm$ 1.48 & 5.98 $\pm$ 0.59 &23.07 $\pm$ 1.22 & 5.67 $\pm$ 0.54 & 23.14 $\pm$ 1.33 & 6.06 $\pm$ 0.67 \\
\bottomrule
\end{tabular}
\end{table}

\clearpage
\section{Qualitative Results}
\label{app: Qualitative_results}
In this section, we present qualitative comparisons of FestDPO with PSO and DrPO for text-to-image generation across three prompts sets: Pick-a-Pic v2, PartiPrompts, and HPS v2. All methods use SDXL-Turbo as the base model.

\subsection{Pick-a-Pic V2}

\begin{figure}[H]
    \centering
    \includegraphics[
        width=\linewidth,
        height=0.75\textheight,
        keepaspectratio
    ]{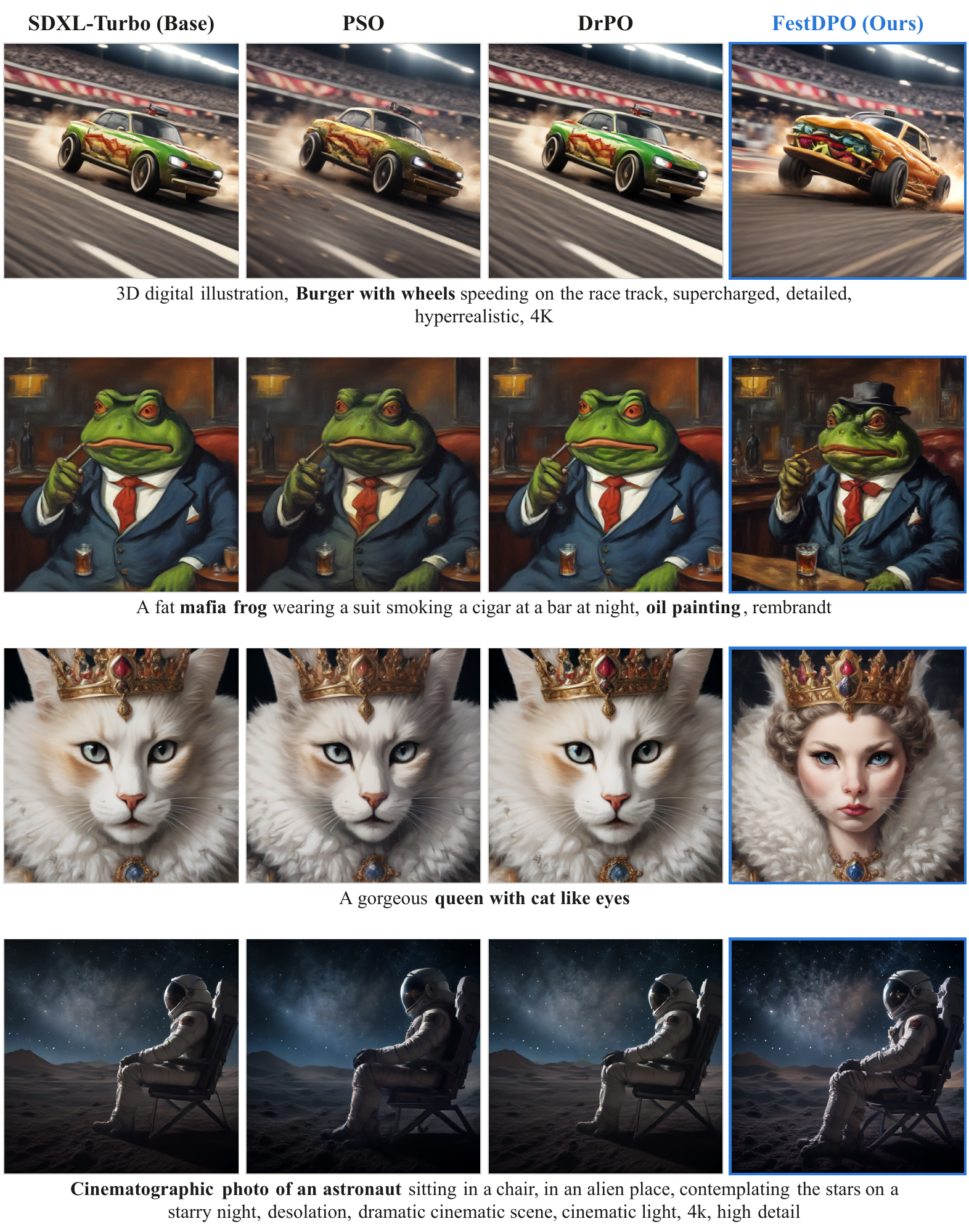}
    \caption{Qualitative comparison of FestDPO with PSO and DrPO on Pick-a-Pic V2.}
    \label{fig:qualitative_pickapic}
\end{figure}
\clearpage

\subsection{Parti-Prompts}
\begin{figure}[H]
    \centering
    \includegraphics[
        width=\linewidth,
        height=0.75\textheight,
        keepaspectratio
    ]{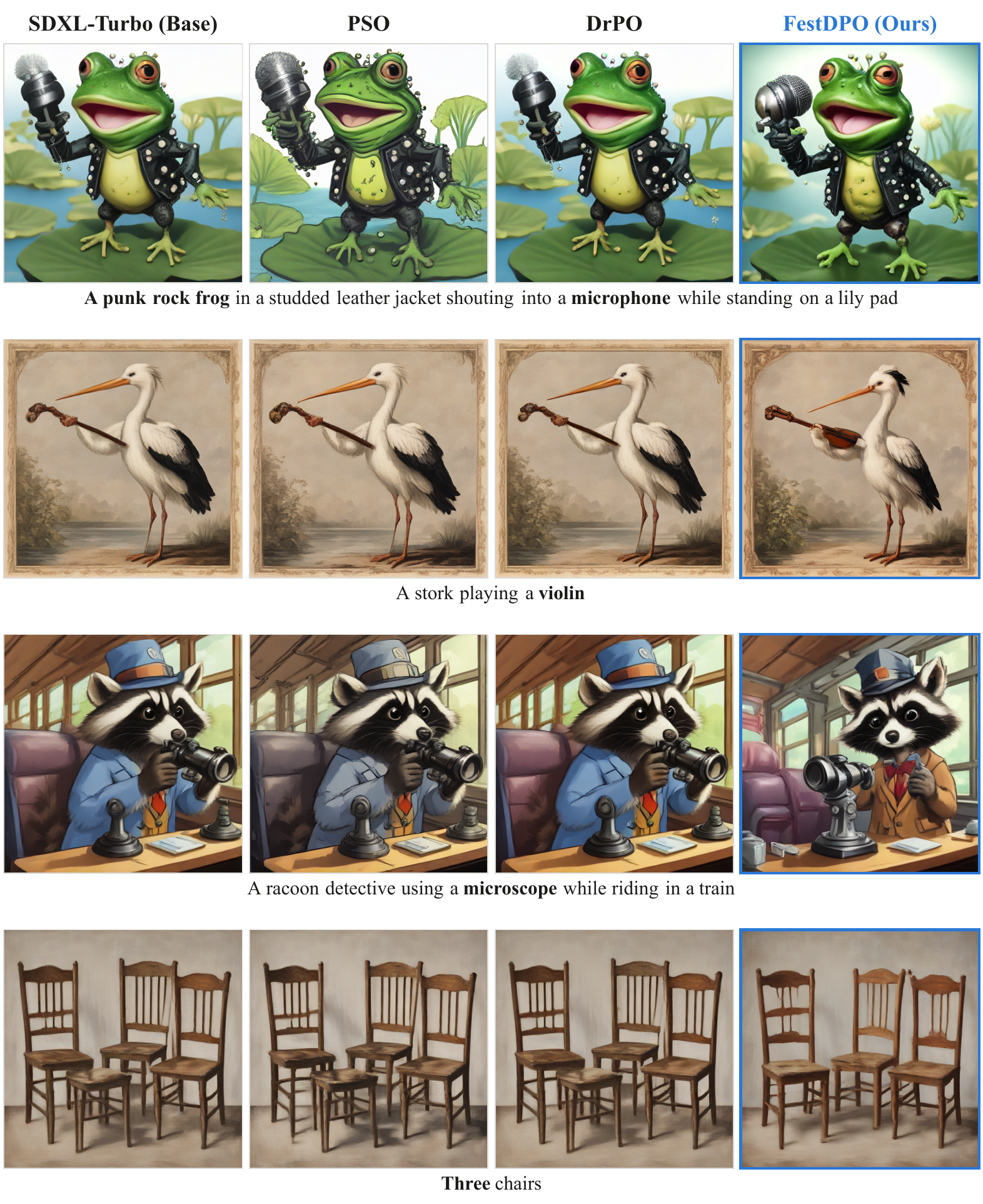}
    \caption{Qualitative comparison of FestDPO with PSO and DrPO on Parti-Prompts.}
    \label{fig:qualitative_pickapic}
\end{figure}
\clearpage

\subsection{HPSv2}
\begin{figure}[H]
    \centering
    \includegraphics[
        width=\linewidth,
        height=0.75\textheight,
        keepaspectratio
    ]{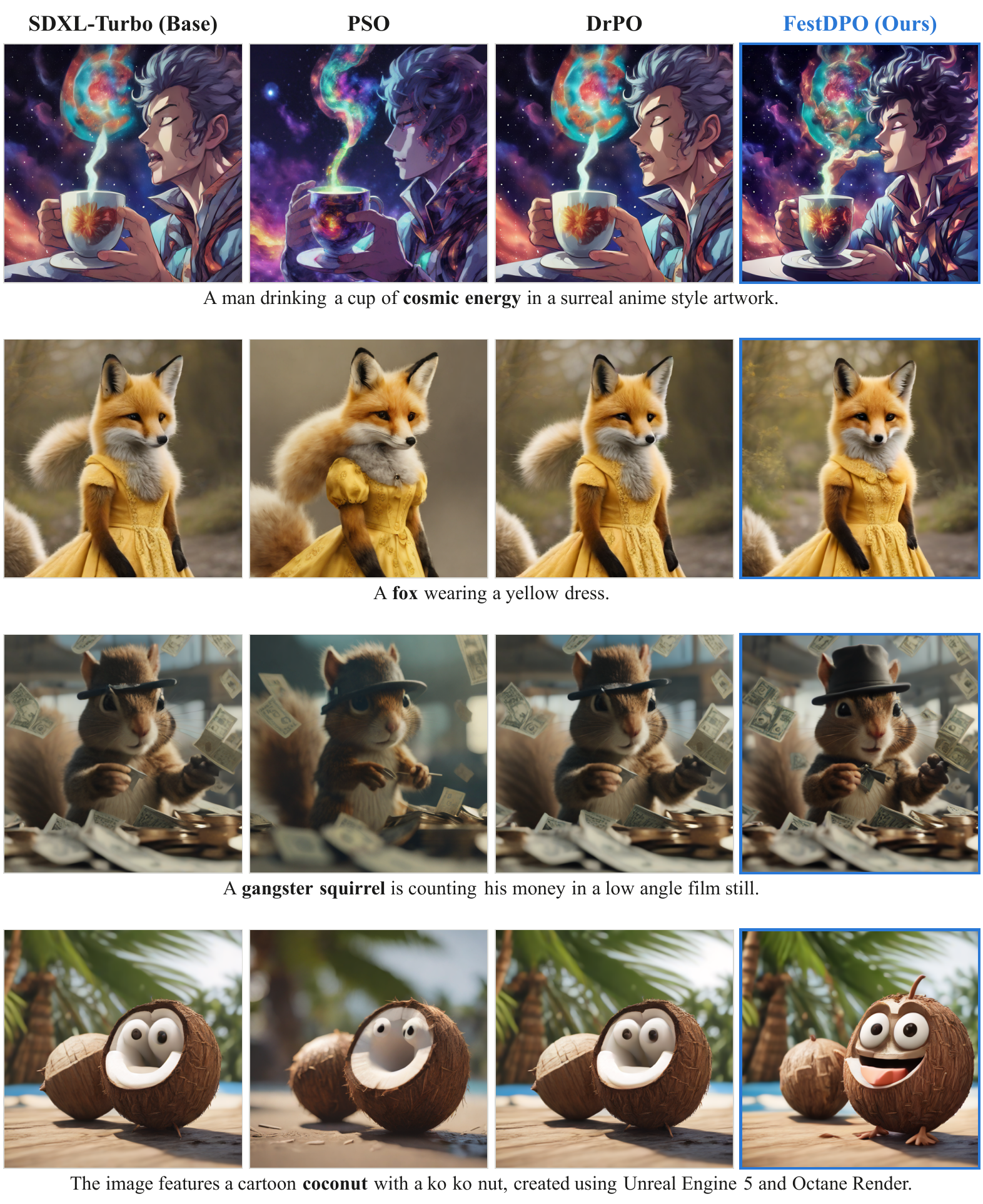}
    \caption{Qualitative comparison of FestDPO with PSO and DrPO on HPSv2.}
    \label{fig:qualitative_pickapic}
\end{figure}

\clearpage

\section{Human Evaluation}\label{app:human evaluation}

\textbf{Evaluation protocol.}
We conduct the human evaluation using the annotation interface shown in \Cref{fig:human-eval-ui}.
The evaluation includes 80 prompts randomly drawn from the three prompt sets used in our experiments: 29 from Pick-a-Pic v2, 20 from Parti-Prompts, and 31 from HPSv2.
For each prompt, annotators are presented with four images generated by SDXL-Turbo, PSO, DrPO, and FestDPO and rate each image on a 1--5 scale for prompt alignment and aesthetics.
To prevent annotators from identifying which method generates each image, all images are anonymized.
For each annotator, we independently randomize both the prompt order and the arrangement of the four images.

\textbf{Participants.}
We recruited 15 annotators from collaborators, lab members, and friends of the authors for the human evaluation.
In total, we collected 9,600 ratings (15 annotators $\times$ 320 images $\times$ 2 evaluation criteria).
The median completion time per annotator is about 36 minutes.

\textbf{Results and reliability.}
As shown in \Cref{tab:human-eval-full}, FestDPO achieves the highest mean ratings for both prompt alignment and aesthetics across all three prompt sets. The improvements over all baselines are statistically significant ($p < 0.001$, Wilcoxon signed-rank test \citep{wilcoxon1945individual}). The pairwise comparison in \Cref{fig:human-eval-winrate} further shows consistently higher win rates for FestDPO against all baselines on both criteria. Specifically, FestDPO achieves win rates of 76--84\% for prompt alignment and 72--81\% for aesthetics against the three baselines.
Moreover, all 15 annotators individually assign FestDPO the highest mean rating on both criteria.
The mean pairwise Spearman correlation between annotators is 0.35 for prompt alignment and 0.32 for aesthetics.
Finally, the mean ratings differ by at most 0.06 across the four display positions, suggesting that the results are not substantially affected by image placement.

\begin{figure}[H]
    \centering
    \includegraphics[
        width=1.\linewidth,
        height=0.75\textheight,
        keepaspectratio
    ]{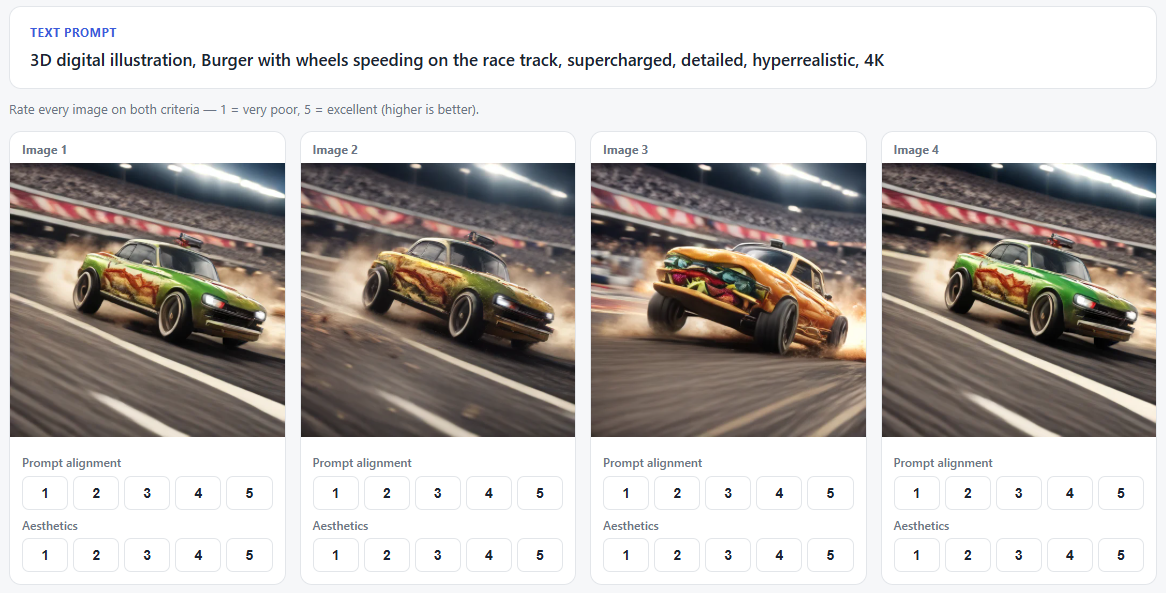}
    \caption{Screenshot of the annotation interface used for human evaluation. For each text prompt, four images generated by different methods are displayed side by side in randomized order. Annotators rate each image on a 1--5 scale for prompt alignment and aesthetics.}
    \label{fig:human-eval-ui}
\end{figure}

\begin{table}[!ht]
\centering
\small
\setlength{\tabcolsep}{5pt}
\caption{Human evaluation results on Pick-a-Pic v2, Parti-Prompts, HPSv2.}
\label{tab:human-eval-full}
\begin{tabular}{lcccccccc}
\toprule
 & \multicolumn{2}{c}{Overall} & \multicolumn{2}{c}{Pick-a-Pic v2} & \multicolumn{2}{c}{Parti-Prompts} & \multicolumn{2}{c}{HPSv2} \\
\cmidrule(lr){2-3} \cmidrule(lr){4-5} \cmidrule(lr){6-7} \cmidrule(lr){8-9}
Method & Align. $\uparrow$ & Aes. $\uparrow$ & Align. $\uparrow$ & Aes. $\uparrow$ & Align. $\uparrow$ & Aes. $\uparrow$ & Align. $\uparrow$ & Aes. $\uparrow$ \\
\midrule
SDXL-Turbo & 3.683 & 3.149 & 3.756 & 3.248 & 3.497 & 3.080 & 3.735 & 3.101 \\
PSO & 3.767 & 3.286 & 3.731 & 3.156 & 3.607 & 3.103 & 3.903 & 3.525 \\
DrPO & 3.696 & 3.158 & 3.759 & 3.234 & 3.520 & 3.057 & 3.751 & 3.151 \\
FestDPO (ours) & \textbf{4.199} & \textbf{3.860} & \textbf{4.329} & \textbf{3.936} & \textbf{4.137} & \textbf{3.793} & \textbf{4.118} & \textbf{3.832} \\
\bottomrule
\end{tabular}
\end{table}

\begin{figure}[H]
    \centering
    \includegraphics[
        width=.7\linewidth,
        keepaspectratio
    ]{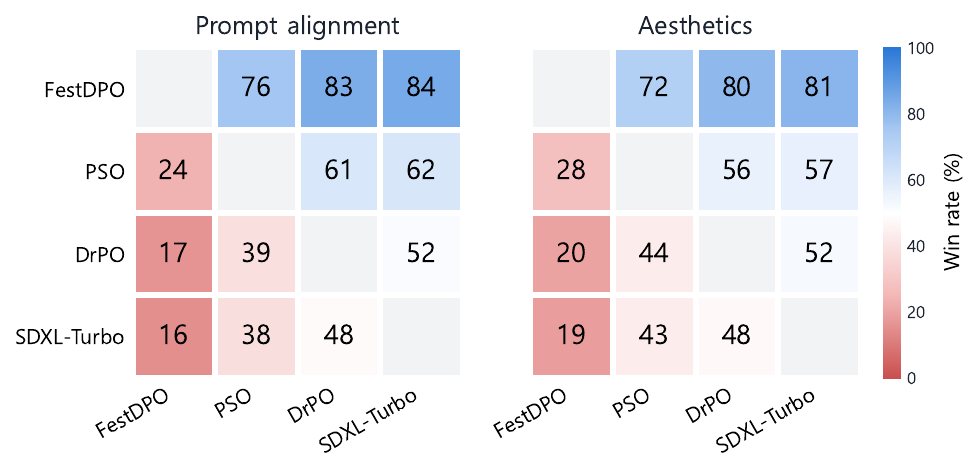}
    \caption{Pairwise win rates (\%) in the human evaluation for prompt alignment and aesthetics.
    Each entry indicates the percentage of comparisons in which
    the method on the y-axis is preferred over the method on the x-axis.}
    \label{fig:human-eval-winrate}
\end{figure}

\end{document}

%% file: math_commands.tex
\usepackage{amsmath,amsfonts,bm}

\def\eqref#1{equation~\ref{#1}}

\def\1{\bm{1}}

\DeclareMathAlphabet{\mathsfit}{\encodingdefault}{\sfdefault}{m}{sl}
\SetMathAlphabet{\mathsfit}{bold}{\encodingdefault}{\sfdefault}{bx}{n}

